\def\signed #1{{\leavevmode\unskip\nobreak\hfil\penalty50\hskip2em
  \hbox{}\nobreak\hfil(#1)%
  \parfillskip=0pt \finalhyphendemerits=0 \endgraf}}
\newsavebox\mybox
\newenvironment{aquote}[1]
  {\savebox\mybox{#1}\begin{quote}}
  {\signed{\usebox\mybox}\end{quote}}
\newtheorem*{theorem*}{Theorem}
\newtheorem{theorem}{Theorem}
\newtheorem{definition}{Definition}
\newcommand\intMax{\mbox{\scriptsize{INT\textunderscore MAX}}}
\newcommand\longIntMax{\mbox{\scriptsize{LONG\textunderscore INT\textunderscore MAX}}}
\newcommand\doubleEpsilon{\mbox{\scriptsize{DBL\textunderscore EPSILON}}}
\newcommand\doubleMin{\mbox{\scriptsize{DBL\textunderscore MIN}}}
\newcommand\doubleMax{\mbox{\scriptsize{DBL\textunderscore MAX}}}
\begin{document}
\title{Two-level histograms for dealing with outliers\\and heavy tail distributions}
\author{Marc Boull\'e}
\affil{Orange Labs - 22300 Lannion - France}
\maketitle

\abstract{
Histograms are among the most popular methods used in exploratory analysis to summarize univariate distributions.
In particular, irregular histograms are good non-parametric density estimators that require very few parameters: the number of bins with their lengths and frequencies.
Many approaches have been proposed in the literature to infer these parameters, either assuming hypotheses about the underlying data distributions or exploiting a model selection approach.
In this paper, we focus on the G-Enum histogram method, which exploits the Minimum Description Length (MDL) principle to build histograms without any user parameter and achieves state-of-the art performance w.r.t accuracy; parsimony and computation time.
We investigate on the limits of this method in the case of outliers or heavy-tailed distributions.
We suggest a two-level heuristic to deal with such cases. The first level exploits a logarithmic transformation of the data to split the data set into a list of data subsets with a controlled range of values. The second level builds a sub-histogram for each data subset and aggregates them to obtain a complete histogram.
Extensive experiments show the benefits of the approach.
}

\section{Introduction}
\label{sec:introduction}

Histograms are among the most popular methods used in exploratory analysis to summarize univariate distributions.
\emph{Regular histograms} are the simplest savor of histograms to represent a distribution: all bins are of the same width and the only parameter to select is the number of bins. While they are suited to roughly uniform distributions \cite{Rissanen1992densityestimation}, they fail to capture the density of more complex distributions.
\emph{Irregular histograms} are non-parametric piecewise constant density estimators that require very few parameters: the number of bins with their widths and frequencies.
Several irregular histogram methods have been proposed in the literature, but they often require user-defined parameters, such as the number of bins or the accuracy $\epsilon$ at which the data is
to be approximated. For example, the minimum description length (MDL) histogram methods \cite{Rissanen1992densityestimation,pmlrv2kontkanen07a} automatically choose the number of bins and their widths, but these widths need to be a multiples of a $\epsilon$ user parameter.
In the context of exploratory analysis, the choice of this parameter is not an easy task, and fully automatic histogram methods are preferable.
Several automatic irregular histogram methods have been proposed in the literature, such as the \emph{taut string} methods based on penalized likelihood \cite{Davies2004,RMG2010}, the \emph{Bayesian blocks} histograms based Bayesian regularization \cite{Scargle2013} or the \emph{G-Enum} method \cite{ZelayaEtAl23} based on the MDL approach.
In a comparison between several regular and irregular histograms methods, the G-Enum method achieves state-of-the-art accuracy for estimated density while being much more scalable than its closest competitors \cite{ZelayaEtAl23}. It is also among the most parsimonious methods, with far fewer intervals than the most accurate alternative methods, which is an essential feature for exploratory analysis when interpretability is an issue.
These properties being in line with our main objective in this paper, we focus on this method.

The G-Enum method extends the MDL method \cite{pmlrv2kontkanen07a} with an automatic choice of $\epsilon$, a fast to compute closed-form evaluation criterion and scalable efficient optimization heuristics.
Its modeling space is described on the basis of $\epsilon$-length elementary bins, where each histogram bin consists of a subset of adjacent $\epsilon$-length  bins.
A granularity parameter is exploited to automatically select the $\epsilon$ parameter.
Together with efficient linearithmic optimization heuristic, this granulated MDL criterion provides a resilient, efficient and fully automated approach to histogram density estimation.
Nevertheless, this method reaches its limits in the case of outliers or heavy-tailed distributions.
We suggest a two-level heuristic to deal with such cases. The first level exploits a logarithmic transformation of the data to split the data set into a list of data subsets with a controlled range of values. The second level builds a sub-histogram for each data subset and aggregates them to obtain a complete histogram.

The rest of the paper is organized as follows.
We briefly recall the G-Enum method in Section~\ref{sec:G-Enum}.
We illustrate the limit of histogram methods in the case of outliers and discuss possible solutions to push these limits in Section~\ref{sec:outliersLimits}.
We suggest a two-level approach for building histograms in Section~\ref{sec:methodDescription},
 and analyze its properties in Section~\ref{sec:analysis}
We perform extensive experiments with artificial data sets in Section~\ref{sec:evaluation}.
Finally, we suggest future work in Section~\ref{sec:futureWork} and give a summary in Section~\ref{sec:conclusion}.

\section{G-Enum method: summary}
\label{sec:G-Enum}

This section is a brief reminder of the G-Enum method \cite{ZelayaEtAl23}.

\subsection{Problem formulation}
\label{sec:formulation}
We consider a sample of $n$ observations $x^n = (x_1,...,x_n)$ on the interval $[x_{min}, x_{max}]$. Let $\epsilon$ be the approximation accuracy, so that each $x_j \in x^n$  can be approximated by $\widetilde{x}_j \in\mathcal{X}=\{ x_{min} + t\epsilon ; t = 0,... , E\}$ where $\displaystyle E =  L/\epsilon$ and  $L = x_{max} - x_{min}$ is the `{\em domain length}' of the data. We expect to have $E \in \mathbb{N}$.\\

Let $\mathcal{C}$ be the set of possible endpoints for sub-intervals as 
\begin{equation*}
\mathcal{C} = \{c_t=x_{min} - \epsilon/2 + t\epsilon ; t = 0,\ldots , E \}
\end{equation*}

These endpoints define $E$ {\em elementary bins}  of length $\epsilon$, which are called $\epsilon$-bins. They are the building blocks of histogram intervals: each combination of $\epsilon$-bins into $K$ intervals, with $K$ ranging from 1 to $E$, defines a histogram model. In this range of possibilities, the goal is to select a set of $K-1$ endpoint $C=(c_1,...,c_{K-1}),~ c_k \in \mathcal{C}$ such that $[c_0, c_E]=[x_{min} - \epsilon/2, x_{max} + \epsilon/2]$ is partitioned into $K$ intervals $\{ [c_0, c_1],]c_1, c_2], ..., ]c_{K-1},c_E]\}$ that are well-suited to the actual data distribution. Each interval $k$ has a data count of $h_k$ entries and a length $L_k=c_k - c_{k-1}$, which is a multiple of $\epsilon$: \begin{equation*}
\forall k, ~\exists ~ E_k \in \mathbb{N} \textrm{~ such that~} L_k = E_k \cdot \epsilon
\end{equation*}

A histogram model is entirely defined by the choice of the number of intervals, the set of endpoints that define them and their data counts. We thus note a histogram model $\mathcal{M} = (K, C, \{h_k\}_{1 \leq k \leq K}) $. 
The relevance of each model can be measured through different types of MDL criteria, for example using an enumerative criterion.

\subsubsection{Granularity and choice of $\epsilon$}

To get rid of the user parameter $\epsilon$, a new method parameter is introduced, that will automatically be inferred.
Let $G$ be the granularity parameter.
For a given $E$, the numerical domain is split into $G$ bins ($1 \leq G \leq E$) of equal width.
In practice, the constant $E=10^9$ is used, which is both close to the limits of the representation of machine integers and allows to obtain very accurate histograms, with an accuracy of up to one billionth of the value domain.
Each of these new elementary bins, that are called {\em $g$-bins}, is composed of $g = E/G$ $\epsilon$-bins. Each of the intervals of any histogram constructed has then a length that is a multiple of these $g$-bins. In other words, each interval is no longer composed of a multiple of $\epsilon$-bins but rather composed of $G_k$ $g$-bins. 

This new criterion, which is called {\bf G-Enum} is still very similar to the MDL-based enumerative criterion {\bf Enum} for histograms, as shown in table \ref{tab:comp-GMODL}.

\subsection{Enum and G-Enum criteria for histogram models}

\renewcommand{\arraystretch}{2}
\begin{table*}[!ht]
\centering
\caption{Term comparison of the Enum and G-Enum criteria}
\begin{tabular}{|p{1.2cm}|p{3cm}|p{4.8cm}|p{2.6cm}|}
\hline
Criterion & Indexing terms & Multinomial terms &  Bin index terms\\\hline\hline
Enum &  $ \displaystyle \log{}^* K + \log{} \binom{E+K-1}{K-1}$ & $\displaystyle \log{} \binom{n+K-1}{K-1}+  \log{} \frac{n!}{h_1!... h_K!}$ & $\sum^K_{k=1} h_k \log{} E_k$\\ \hline
\mbox{G-Enum} &  $ \displaystyle  \log{}^* K +\log^* G + \log{} {{G+K-1}\choose{K-1}}$ & $\displaystyle  \log{} \binom{n+K-1}{K-1}+  \log{} \frac{n!}{h_1!... h_K!}$ &$\sum^K_{k=1} h_k \log{} G_k + n \log{} \frac{E}{G}$\\\hline
\end{tabular}
\label{tab:comp-GMODL}
\end{table*}

Table~\ref{tab:comp-GMODL} recalls the Enum criterion for histogram models and its granulated extension G-Enum.
The $\log{}^*K$ and $\log{}^*G$ prior terms encode the choice of the number of intervals and of the granularity parameter. They exploit Rissanen's universal prior for integers \cite{rissanen1983}, that favors small integers, i.e. simpler histograms.
The $\log{} \binom{G+K-1}{K-1}$ term encodes the boundaries of the intervals at the granularity precision.
The multinomial terms are used to encode the multinomial distribution of the $n$ instances on the $K$ intervals.
They rely on an enumerative criterion with appealing optimality properties \cite{BoulleEtAlArxiv16}.
The $\sum^K_{k=1} h_k \log{} G_k + n \log{} \frac{E}{G}$ term encodes the position of the $h_k$ instances of each interval on the $E_k = G_k \frac{E}{G}$ elementary $\epsilon$-bins of the interval.

\subsection{Optimization algorithms}

For additive criteria such as Enum, a dynamic programming algorithm can be applied to obtain the optimal solution. However, its computational complexity is cubic w.r.t. the size of the data, which makes it impractical  in the case of large data sets.
The G-Enum method exploits a greedy bottom-up optimization heuristic followed by post-optimization steps that mainly consist in adding, removing, or moving endpoints around the locally optimal solution.
Experiments in \cite{ZelayaEtAl23} show that the accuracy of histograms optimized using these heuristics is indistinguishable from those using the optimal algorithm, while the computational complexity is O($n \log n$) instead of O$(n^3)$.

\subsection{Experimental results}

We summarize below the results of the comparative experiments performed to evaluate the G-Enum method \cite{ZelayaEtAl23}.
The comparison include the following irregular and regular histogram methods:
\begin{itemize}
\item \texttt{G-Enum}, the method summarized in this section,
\item \texttt{Taut string} histograms \cite{Davies2004,Davies2009},
\item \texttt{RMG} histograms \cite{RMG2010},
\item \texttt{Bayesian blocks} \cite{Scargle2013},
\item Sturges rule histograms,
\item \texttt{Freedman-Diaconis} rule histograms \cite{Freedman1981}.
\end{itemize}

They are evaluated on artificial datasets with know distributions: Normal, Cauchy, Uniform, Triangle, Triangle mixture and Gaussian mixture.
The methods are compared on three criterions: parsimony using the number of intervals, accuracy evaluated with the Hellinger distance and computation time.
The analysis of the experimental results show that the \texttt{G-Enum} method achieves state of the art accuracy while being much more parsimonious and fast its closest competitors.
\begin{aquote} {\cite{ZelayaEtAl23}}
"Although rarely the best for each distribution type, \texttt{G-Enum} histograms are consistently among the best estimators, and this without the high variability of the other methods. Focusing on irregular histograms, \texttt{G-Enum} is certainly among the most parsimonious in number of intervals. For exploratory analysis, this is an important quality because it makes the interpretation of the results easier and more reliable. \texttt{G-Enum} is also by far the fastest of irregular methods, making it suitable to large data sets."
\end{aquote}

\section{Limits of histogram methods w.r.t. outliers}
\label{sec:outliersLimits}

We first give an illustrative example of the limits of the G-Enum method in the case of outliers, and then discuss possible solutions to push these limits.

\subsection{Illustative exemple}
\label{sec:outliersLimitsExample}
Let us consider a data set containing $n=10,000$ data entries distributed according to a Gaussian distribution $G(\mu=0, \sigma=1)$. The range of the numerical domain is $L = (x_{max}-x_{min})$.
As $\sigma=1$, we have $L \leq 10$ with high probability. 
The range of the numerical domain at $\epsilon$ accuracy is $E=L/\epsilon$.
Let us recall that we have chosen $E=10^9$ to be compliant with the computer representation of integers using four bytes. As a matter of fact, computer integers are in the value domain $]-\intMax; \intMax[$, with $\intMax=2^{31}\approx 2.10^9$.
Using the $E=10^9$ precision parameter, the bounds of the histogram intervals are very precise, and the underlying distribution can be very well approximated as the number of data entries $n$ increases.

Let us now assume that we have an outlier data entry in our data set, with value $x_{out}=10^{12}$. The range of the value domain becomes $L\approx 10^{12}$ and using the same precision parameter $E=10^9$ amounts to setting $\epsilon \approx 1000$. With this $\epsilon$ parameter, the optimal histogram reduces to a histogram with two intervals, consisting of a first interval of width $E_1=1$ that contains all the $n$ initial Gaussian data entries in a bin of width 1000, and a second interval of width $E_2=E-1$ containing the outlier data entry. The quality of the histogram becomes very poor as the whole data set except one outlier is summarized using one single interval.

Let us note that, to the best of our knowledge, this problem is likely to occur with most alternative histogram methods.
In the following we investigate on solutions to push these limits.

\subsection{Possible solutions to push the limits of the method}
We suggest three possible solutions to push the limits of the method and summarize their potential benefits and drawbacks.

\subsubsection{Use of long integers}
\label{sec:outliersWithLongInts}

One computer-based solution consists in using long integers instead of standard integers for the choice of our precision parameter $E$.
We could then extend the precision parameter to $E=10^{18}$ and be compliant with the computer representation of long integers using eight bytes, in the value domain $]-\longIntMax; \longIntMax[$, where $\longIntMax=2^{63}\approx 9.10^{18}$.

Unfortunately, this solution is not likely to work well.
First, it extends the outlier limits by "`only"' nine additional orders of magnitude.
Second, this long int based choice of $E$ raises critical numerical issues in the optimization algorithm.

For example, let us assume that we have an interval $i$ with length $E_i \gg 1$ and frequency $h_i$. Let us consider the merge of this interval with a singleton interval $j$ of width $E_j=1$ and frequency $h_j=1$.
The likelihood part $C_w()$ of the histogram cost criterion related to the width of the intervals is
\begin{eqnarray*}
C_w(i) &=& h_i \log E_i,\\
C_w(j) &=& 0,\\
C_w(i \cup j) &=& h_i \log (E_i+1).
\end{eqnarray*}

The variation of cost $\delta C_w$ is then
\begin{eqnarray*}
\delta C_w &=& C_w(i \cup j)-C_w(i)-C_w(j),\\
  &=& h_i (\log (E_i+1) - \log(E_i)),\\
  &=& h_i (\log E_i(1+1/E_i) - \log(E_i)),\\
  &=& h_i \log (1+1/E_i),\\
	&\approx& h_i/E_i.
\end{eqnarray*}

On a computer, real values are stored using a floating-point representation with a mantissa up to 15 digits ($\doubleEpsilon \approx 2. 10^{-16}$). Two distinct values will be equal if their relative difference is lower than $\doubleEpsilon$.
Back to our optimization algorithm, for $h_i\approx 1$ and $E_i \approx E$, we get $\delta C_w \approx 10^{-18}=0$. Therefore, finding the best merge of intervals may be impossible in some tricky cases.

\subsubsection{Extension to hierarchical histogram models}
\label{sec:outliersWithHierarchicalModels}

One solution to cope with outliers consists in extending the G-Enum method to a hierarchical model. 
A histogram consists in a set of adjacent intervals, whereas a hierarchical histogram consists in a tree of intervals, where:
\begin{itemize}
	\item  each leaf node is an interval,
	\item each intermediate node can be seen both as an interval, union of its children intervals, and as a histogram, set of its children intervals,
	\item the root node represents the whole value domain.
\end{itemize}
Such a hierarchical histogram could potentially cope with outliers.
For example, using the data set described in Section~\ref{sec:outliersLimitsExample}, we could have one root node with three children nodes; the first one for all the Gaussian data entries, the second one with an empty interval and the last one with the outlier. Then the first node could be divided again so as to produce a standard histogram focused on the Gaussian data entries, without any outlier issue.

This possible solution looks appealing, but its implementation may encounter several problems:
\begin{itemize}
	\item devising an effective prior for hierarchical models is not an easy task,
	\item optimizing hierarchical models is known to be difficult, with little hope of achieving optimality efficiently,
	\item the optimization algorithm may face numerical problems, since many models to be compared may have almost the same cost.
\end{itemize}

\subsubsection{Exploitation of the properties of floating-point representation}
\label{sec:outliersWithFloatingPoint}

Let us first summarize how real values are encoded on computers using a floating-point representation. Computer real values are stored on 8 bytes and thus encoded using 64 bits:
\begin{itemize}
	\item 1 bit for the sign: -1 or +1,
	\item 11 bits for the exponent: between $\doubleMin=10^{-308}$ and $\doubleMax=10^{308}$,
	\item 52 bits for the mantissa: about 15 digits, for mantissa in interval $[1;10[$.
\end{itemize}

Whereas mathematical real values that belong to $\mathbb{R}$ are continuous and unbounded, computer real values are discrete in essence and bounded. They belong to a finite set $\mathbb{R}^{(cr)}$ (where $(cr)$ stand for \emph{computer representation}). The set $\mathbb{R}^{(cr)}$ contains $2^{64} ~ \approx 1.8.10^{19}$ distinct values that belong to
the finite numerical domain $[-10^{308};-10^{-308}] \cup \{0\} \cup [10^{-308};10^{308}]$.
Let us note that all computer real values have an approximately constant relative precision related to the mantissa, but an absolute precision that exponentially increases around the value 0. There are more than 600 orders of magnitude of difference of absolute precision between the largest and the smallest computer real values. In other terms, mathematical real values have translation-invariant
density properties all over $\mathbb{R}$ (like in the case of fixed-point representation values). Conversely, the density of floating-point representation values in $\mathbb{R}^{(cr)}$ is heavily peaked around the value 0: it increases exponentially for $x \rightarrow 0$ until reaching the underflow regime and decrease exponentially for $x \rightarrow \infty$ until reaching the overflow regime.

Histograms where the width of intervals are multiple of $\epsilon$-bins rely on a constant absolute precision and they cannot cope well with outliers.
We suggest to investigate the properties of floating-point representation to extend the G-Enum method. This is detailed in next section.

\section{Two-level method for histograms}
\label{sec:methodDescription}

The principle of the method is to build a histogram directly from a data set  only if the result is likely to be of  sufficient quality. Otherwise, the data set is split into data subsets and a global histogram is obtained by aggregating the sub histograms built from each data subset.
Note that contrary to the hierarchical models suggested in Section~\ref{sec:outliersWithHierarchicalModels}, the method outputs a single global histogram, not a hierarchy of histogram.

In this section, we first introduce a quality criterion based on the notion of well conditioned data set for histograms.
We then present a log-transformation method that can be applied to any data set and will be used to  effectively split the data set into data subsets.
We also suggest a way to get around the limits of floating-point representation.
We finally detail the two-level method that exploits the quality criterion and the split heuristic.

\subsection{Well conditioned data sets for histograms}
\label{sec:wchProperty}

Let introduce the notion of well conditioned data sets for histograms.

\begin{definition}
A data set $\mathcal{D}$ is \emph{well conditioned for histograms} (WCH) of $\epsilon$-bin length $E$ if all its data entries with distinct values can be separated in different intervals. Otherwise, a data set is said \emph{ill conditioned for histograms} (ICH). 
\end{definition}

If a data set is well conditioned, histograms can be build without any risk of loss of numerical precision.
To investigate this notion, let us first define some characteristics of data sets.

\begin{definition}
The \emph{range} of a data set $\mathcal{D}$ is defined as $rng(\mathcal{D}) = \max_{\mathcal{D}} x - \min_{\mathcal{D}} x$, that is the difference between it maximum and minimum values.
\end{definition}

\begin{definition}
The \emph{precision} of a data set $\mathcal{D}$ is defined as $pr(\mathcal{D}) = \min_{\mathcal{D}, \delta x>0} \delta x$, that if the min difference between two successive distinct values.
\end{definition}

\begin{definition}
The \emph{granular length} of a data set $\mathcal{D}$ is defined as $gr(\mathcal{D}) = rng(\mathcal{D})/pr(\mathcal{D})$. 
\end{definition}

The following results are trivial and given without proof.

\begin{theorem}
A data set $\mathcal{D}$ is ill conditioned for histograms if its precision is smaller that the $\epsilon$-bin length of the histogram, or if its granular length is larger that number $E$ of $\epsilon$-bins.
More formally, we have:
\begin{itemize}
	\item $\mathcal{D}$ is ICH $\Leftrightarrow pr(\mathcal{D}) < \epsilon$,
	\item $\mathcal{D}$ is ICH $\Leftrightarrow gr(\mathcal{D}) > E$.
\end{itemize}
\end{theorem}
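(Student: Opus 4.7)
The plan is to peel off the definitions layer by layer. The WCH/ICH definition says $\mathcal{D}$ is WCH exactly when, for every pair of distinct data values, some admissible histogram places them in different intervals; ICH is its negation. Since every histogram interval is, by construction, a union of adjacent $\epsilon$-bins, two data values can be separated by some histogram if and only if they already sit in distinct $\epsilon$-bins. Hence WCH is equivalent to the purely combinatorial condition: no two distinct data values share an $\epsilon$-bin.

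Next, I would argue the elementary geometric fact that two distinct values $x_i<x_j$ lie in the same $\epsilon$-bin iff $x_j-x_i<\epsilon$: a bin has diameter $\epsilon$, so a difference of at least $\epsilon$ forces the two values into distinct (at least adjacent) bins, while for any gap strictly smaller than $\epsilon$ the two values can indeed coexist in a single $\epsilon$-bin under the grid $\{c_t\}$ of Section~\ref{sec:formulation}. Applying this to the pair realizing the minimum positive difference $pr(\mathcal{D})$ and taking the contrapositive of the WCH criterion yields the first stated equivalence: $\mathcal{D}$ is ICH iff $pr(\mathcal{D})<\epsilon$.

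For the second equivalence I would simply invoke the identity $\epsilon=L/E=rng(\mathcal{D})/E$ from Section~\ref{sec:formulation}. Chaining strict inequalities, $pr(\mathcal{D})<\epsilon$ is equivalent to $E\cdot pr(\mathcal{D})<rng(\mathcal{D})$, hence to $gr(\mathcal{D})=rng(\mathcal{D})/pr(\mathcal{D})>E$, which combined with the first step closes the proof.

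The main obstacle, and the only point that is not pure definition-chasing, is the ``only if'' direction of the first equivalence: $pr(\mathcal{D})<\epsilon$ must force at least one pair of distinct values to actually collide in a single $\epsilon$-bin, not merely to possibly do so. Because the grid $c_t=x_{min}-\epsilon/2+t\epsilon$ is fixed by the data, one needs to observe either that some minimum-gap pair unavoidably shares a bin under this concrete grid, or (as the author seems to intend, since the result is labelled trivial) to read WCH as requiring robust separability of every distinct pair regardless of the bin grid's alignment, in which case the claim is immediate from the geometric observation above.
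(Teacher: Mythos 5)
The paper offers no proof to compare against: it explicitly states that these results ``are trivial and given without proof.'' Your definition-unwinding argument is the intended one, and your reduction of the second bullet to the first via $\epsilon = L/E = rng(\mathcal{D})/E$ is exactly right. You have also correctly isolated the one point that is not pure definition-chasing, and it is worth stating plainly that this is a genuine (if minor) defect of the theorem as literally written rather than of your proof: under the fixed grid $c_t = x_{min}-\epsilon/2+t\epsilon$, the implication $pr(\mathcal{D})<\epsilon \Rightarrow \mathrm{ICH}$ can fail. For instance, two successive distinct values at distance $0.6\,\epsilon$ that happen to straddle a bin boundary land in adjacent $\epsilon$-bins and are separable, so no collision occurs even though $pr(\mathcal{D})<\epsilon$. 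Only the converse direction ($\mathrm{ICH}\Rightarrow pr(\mathcal{D})<\epsilon$, equivalently $pr(\mathcal{D})\geq\epsilon\Rightarrow \mathrm{WCH}$) holds unconditionally, since two points in a half-open bin of length $\epsilon$ differ by strictly less than $\epsilon$. The equivalence as stated is therefore true only up to grid alignment (or up to a factor of~$2$: $pr(\mathcal{D})<\epsilon/2$ does force a collision by a pigeonhole on half-bins), which is consistent with your suggested reading of WCH as alignment-robust separability and with the heuristic role the criterion plays in the rest of the paper.
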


\begin{theorem}
The WCH (resp. ICH) property of a data set $\mathcal{D} \subset \mathbb{R}$ is invariant under any linear transformation of the data entries of $\mathcal{D}$.
\end{theorem}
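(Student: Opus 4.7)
The plan is to reduce this to the previous theorem by showing that the granular length $gr(\mathcal{D})$ is invariant under linear transformations, and then invoke the characterization $\mathcal{D} \text{ is ICH} \Leftrightarrow gr(\mathcal{D}) > E$ (equivalently, WCH iff $gr(\mathcal{D}) \leq E$). Since the WCH/ICH dichotomy depends on $\mathcal{D}$ only through $gr(\mathcal{D})$, invariance of $gr$ immediately yields invariance of the property.

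First, I would fix a linear transformation $\varphi : x \mapsto ax + b$ with $a \neq 0$ (the degenerate case $a=0$ collapses $\mathcal{D}$ to a single value, for which the notions of range and precision are meaningless, so it should be excluded as is customary) and let $\mathcal{D}' = \varphi(\mathcal{D})$. The range transforms as
\begin{equation*}
rng(\mathcal{D}') = \max_{\mathcal{D}'} x' - \min_{\mathcal{D}'} x' = |a|\,(\max_{\mathcal{D}} x - \min_{\mathcal{D}} x) = |a|\,rng(\mathcal{D}),
\end{equation*}
because $\varphi$ is monotone (increasing if $a>0$, decreasing if $a<0$) and hence preserves the max/min up to a sign swap; the constant $b$ cancels in the difference.

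Next, I would argue the same factor $|a|$ governs the precision. For any two distinct values $x,y \in \mathcal{D}$, their images $\varphi(x), \varphi(y)$ are distinct (since $a \neq 0$) and satisfy $|\varphi(x) - \varphi(y)| = |a|\,|x-y|$. Because $\varphi$ is a bijection between the value set of $\mathcal{D}$ and that of $\mathcal{D}'$, the collection of positive successive differences in $\mathcal{D}'$ is exactly $\{|a|\,\delta x : \delta x > 0 \text{ successive difference in } \mathcal{D}\}$, so
\begin{equation*}
pr(\mathcal{D}') = |a|\,pr(\mathcal{D}).
\end{equation*}
Dividing yields $gr(\mathcal{D}') = rng(\mathcal{D}')/pr(\mathcal{D}') = gr(\mathcal{D})$.

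The main obstacle, if any, is pedestrian rather than conceptual: care must be taken that "successive distinct values" in $\mathcal{D}'$ correspond to successive distinct values in $\mathcal{D}$, which requires $\varphi$ to be a monotone bijection on the value set — guaranteed by $a \neq 0$. Once $gr$ is shown to be invariant, the previous theorem closes the argument: $\mathcal{D}'$ is ICH iff $gr(\mathcal{D}') > E$ iff $gr(\mathcal{D}) > E$ iff $\mathcal{D}$ is ICH, and symmetrically for WCH.
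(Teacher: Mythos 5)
Your proof is correct, and it is essentially the argument the paper has in mind: the paper states this result without proof, calling it trivial, and the natural justification is exactly yours --- both $rng$ and $pr$ scale by $|a|$ under $x \mapsto ax+b$, so the granular length $gr(\mathcal{D}) = rng(\mathcal{D})/pr(\mathcal{D})$ is invariant, and the preceding theorem's characterization $\mathcal{D}$ is ICH $\Leftrightarrow gr(\mathcal{D}) > E$ closes the argument (the paper itself later remarks that ``the ICH threshold is both translation and scale invariant''). Your exclusion of the degenerate case $a=0$ is a reasonable and necessary precision that the paper's statement glosses over.
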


Let us now define the notion of \emph{histogram collision} in a data set $\mathcal{D}$ as the case where two data entries with distinct values fall in the same $\epsilon$-bin of a histogram.
It is noteworthy that the focus is on being able to separate data entries with distinct values, not to separate any data entries that may share the same value.
A data set is ill conditioned for histograms if its number of collisions is greater or equal than 1.
Evaluating the risk of loss of precision while building a histogram from a data set relates to evaluating its ICH property. This can be done in $O(n \log n)$ either by computing the range and precision of the data set or alternatively by counting its number of collisions.


We can notice that the range and precision of a data set are characteristics that are related to extreme value statistics and that they are likely to exhibit a very large variance.
Inspired by robust statistics, we suggest a stronger condition for the ICH property, with a threshold $t_c$ for a minimum number of collisions.
Having $t_c$ collisions in the data set means that $t_c$ data entries fall in a set of colliding bins, each one containing at least two data entries with distinct values.
This covers two extreme cases: a flat one where each colliding bin contains only two data entries and a peaked one with one single colliding bin containing $t_c$ data entries.
We choose to exploit a condition for the ICH property based on the peaked case with $1 < t_c \ll n$ because it is likely to require larger data sets to trigger the condition while minimizing the potential loss of accuracy w.r.t. interval bounds.

Let us introduced another threshold $t_E$ as the number of $\epsilon$-bins in a histogram used to evaluate the ICH property. Although $t_E=E$ seems a natural choice, let us recall that the choice $E=10^9$ is not driven by a required accuracy of one billionths. In fact, the G-Enum method optimizes the granularity of histograms that rely on $G$ bins, $1 \leq G \leq E$, and convergence is expected as $E \rightarrow \infty$. The value $E=10^9$ was then chosen to be as large as possible within the computer numerical limits. We hope that the optimal granularity can be found for $G \ll E$ to avoid potential instabilities around the point of convergence.

In the end, we choose the thresholds $t_c=\log{n}, t_E = \sqrt{E} \log{E}$ and introduce the \emph{PICH} criterion in Definition~\ref{def:PICH}.

\begin{definition}
\label{def:PICH}
A data set $\mathcal{D}$ of size $n$ is \emph{practically ill conditioned for histograms} (PICH) built upon $E$ elementary $\epsilon$-bins if at least one colliding bin within a granularized histogram with $G=\sqrt{E}\log{E}$ bins contains more than $\log{n}$ data entries. Otherwise, the data set is \emph{practically well conditioned for histograms} (PWCH)
\end{definition}

This heuristic PICH criterion is designed to push the limits of the method's applicability.
In practice, the thresholds $t_c$ and $t_E$ have been chosen to jointly optimize a set of competing criteria, which are summarized below.

\begin{itemize}
  \item \emph{automation}
	\begin{itemize}
		\item a parameter-less criterion is important so that data scientists can actually spend more time on the business problem at hand,
	\end{itemize}

  \item \emph{theoretical optimality}
	\begin{itemize}
		\item although $E \rightarrow \infty$ should be considered, $E$ is set to $10^9$ which is close to the computer numerical limits,
		\item $t_E$ is as small as possible to avoid potential instability during the optimization of the granularity in the G-Enum method,
	\end{itemize}

  \item \emph{accuracy}
	\begin{itemize}
		\item $t_c$ is as small as possible to minimize to potential loss of accuracy for the interval bounds,
		\item $t_E$ is as large as possible to allow accurate granularities,
	\end{itemize}

  \item \emph{scalability}
	\begin{itemize}
		\item using large enough $t_c$ and $t_E$ thresholds, the PICH criterion should be conservative enough to avoid triggering advanced heuristics too often in the case of ill conditioned data sets,
		\item although optimal algorithms look appealing, only heuristics with at most super-linear time complexity can be used in the case of large real world data sets.
	\end{itemize}
\end{itemize}

The ICH property and the PICH criterion are further investigated in Section~\ref{sec:evaluation}, with a sensitivity analysis w.r.t. the $t_E$ threshold.

\subsection{Log-transformation of computer real numbers}
\label{sec:logTransformation}

Let us first introduce a new function $\log^{(cr)}$, that extends the standard $\log$ function to any negative, null or positive computer real value:
\begin{eqnarray*}
\log^{(cr)}(x) &=& -\doubleEpsilon-(\log -x-\log \doubleMin), \forall x \in \mathbb{R}_{-}^{*(cr)},\\
\log^{(cr)}(0) &=& 0,\\
\log^{(cr)}(x) &=& \doubleEpsilon+\log x - \log \doubleMin, \forall x \in \mathbb{R}_{+}^{*(cr)}.
\end{eqnarray*}

For $x=mant \times 10^{exp}$, we have $\log x= \log (mant) + exp \times \log 10$.
We now evaluate the bounds of the set $\log(\mathbb{R}^{(cr)})$ obtained after the log-transformation of $\mathbb{R}^{(cr)}$:
\begin{eqnarray*}
\sup(\log(\mathbb{R}^{(cr)})) &=& \log(\doubleEpsilon) + \log \doubleMax - \log \doubleMin,\\
  &\approx& \log 10^{-15} + \log 10^{308} - \log 10^{-308},\\
	&\approx& 600 \times \log 10,\\
\inf(\log(\mathbb{R}^{(cr)})) &=& - \sup(\log(\mathbb{R}^{(cr)})).
\end{eqnarray*}

Whereas the values of $\log(\mathbb{R}^{(cr)})$ exploit the mantissa with the same limits as in $\mathbb{R}^{(cr)}$, the exponents in $\log(\mathbb{R}^{(cr)})$ are bounded by around 3, that is about one hundredth of the related bound in $\mathbb{R}^{(cr)}$ (since $308 ~\approx 3 \times 100$).
The set $\log(\mathbb{R}^{(cr)})$ thus contains about $10^{17}$ distincts values, approximately 100 times less than its super set $\mathbb{R}^{(cr)}$.
Conversely, the log values have approximately the same absolute precision (15 digits) and are almost uniformly distributed on the numerical domain $[-600 \times \log 10;600 \times \log 10]$.

To summarize, this log-transformation provides a monotonous transformation of the \emph{initial values} in $\mathbb{R}^{(cr)}$ to \emph{log values} in $\log (\mathbb{R}^{(cr)})$, with an almost constant density on a smaller value domain.
Despite the decrease of size compared to the $\mathbb{R}^{(cr)}$, we suggest that these properties of $\log (\mathbb{R}^{(cr)})$ are particularly suitable for histograms, which assume a piecewise constant density per interval. 
Let us notice that for a histogram build on $\mathbb{R}^{(cr)}$, each $\epsilon$-bin represents a sub set with a constant maximum \emph{absolute difference} of values. Conversely, on $\log(\mathbb{R}^{(cr)})$, each $\epsilon$-bin represents a sub set with a constant maximum \emph{relative difference} of values.
We thus expect the $\log(\mathbb{R}^{(cr)})$ space to be well suited for dividing a data set with a wide range of values into data subsets with limited range of values.

\smallskip

\begin{figure}[htbp!]
\begin{center}
\includegraphics[width=0.7\columnwidth]{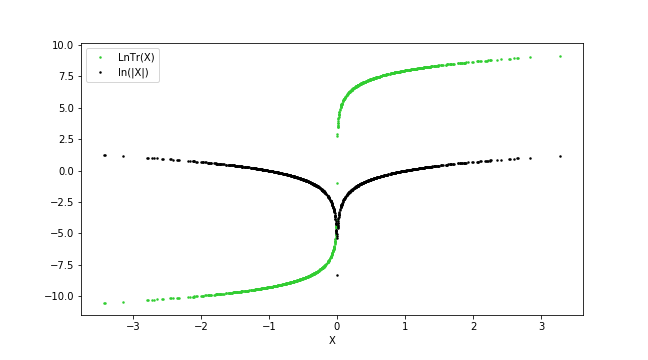}
\end{center}
\caption{Log-transformation of data generated from a Gaussian distribution $G(\mu=0, \sigma=1)$.}
\label{fig:LogTrG_0_1}
\end{figure}

In the case of a data set $\mathcal{D}$ to analyze, we suggest to adapt the $\log^{(cr)}$ function in order to reduce the range of values and to avoid the potential gaps around the value 0:
\begin{eqnarray*}
\log_{\mathcal{D}}^{(cr)}(x) &=& - \min_{\mathcal{D_{-}^{*}},\delta x > 0} {\delta \log x} -(\log -x-\log \min_\mathcal{D_{-}^{*}} -x), \forall x \in \mathcal{D}_{-}^{*},\\
\log_{\mathcal{D}}^{(cr)}(0) &=& 0,\\
\log_{\mathcal{D}}^{(cr)}(x) &=& \min_{\mathcal{D_{+}^{*}},\delta x > 0} {\delta \log x} +\log x - \log \min_\mathcal{D_{+}^{*}} x, \forall x \in \mathcal{D}_{+}^{*}.
\end{eqnarray*}

This is illustrated in Figure~\ref{fig:LogTrG_0_1} in the the case of a data set of size $n=1000$ drawn from a Gaussian distribution $G(\mu=0, \sigma=1)$.
Mainly, the log-transformation exploits the opposite of the function $\log -x$ for the negative values and the function $\log x$ for the positive values, and shifts them to achieve a smooth monotonous transformation of all the values.

\subsection{Dealing with the limits of floating-point representation}
\label{sec:mantissaLimits}

Whereas the PWCH criterion allows to cope with data sets with very large range of values, new numerical limits can be encountered for data sets with very small range.

As an example, let us take a data set $\mathcal{D}$ with a range of 1, $rng(\mathcal{D}) = \max_{\mathcal{D}} x - \min_{\mathcal{D}} x=1$.
Let us assume that this data set is PWCH, so that the G-Enum method is able to separate all its data entries using $E=10^9$ $\epsilon$-bins.
If $\min_{\mathcal{D}} x=1$ and $\max_{\mathcal{D}} x=2$, the 15 digits of the mantissa of computer real values (cf. Section~\ref{sec:outliersWithFloatingPoint}) allow to encode the boundaries of the $\epsilon$-bins with an excellent precision.
If $\min_{\mathcal{D}} x=1,000,000,000$ and $\max_{\mathcal{D}} x=1,000,000,001$, 10 digits are necessary to encode the boundaries of the data set, and only 5 digits remain available to encode the boundaries of the $10^9$ $\epsilon$-bins, which is not feasible.
A more critical limit is that the computer real values no longer behave as continuous  values within the range of this data set, as only $10^5$ distinct values can be encoded.
The "discrete" limit of computer real values is reached, and there is an important risk that the G-Enum method will treat this data set as a discrete one even if it comes from a continuous data distribution.

We suggest getting around this numerical limit by estimating the number of distinct values $n_d$ that can be encoded within the range of a data set and to exploit an accuracy parameter $E$ small enough to get on average at least $t_n=100$ distinct values per $\epsilon$-bin.
Let us first focus on the case where $0 \notin [\min_{\mathcal{D}} x; \max_{\mathcal{D}} x]$, for example $0 < \min_{\mathcal{D}} x < \max_{\mathcal{D}} x$.
The total number of distinct positive values of $\mathbb{R}^{(cr)}$ that can be encoded between $\doubleMin$ and $\doubleMax$ is $2^{64}/2 ~ \approx 9.10^{18}$  (cf. Section~\ref{sec:outliersWithFloatingPoint}).
Assuming that the density is almost constant in  $\log (\mathbb{R}^{(cr)})$ (cf. Section \ref{sec:logTransformation}), a raw approximation of the total number of distinct values 
in $[\min_{\mathcal{D}} x;\max_{\mathcal{D}} x]$ is
\begin{eqnarray*}
\mathrm{if} \; 0 < \min_{\mathcal{D}} x < \max_{\mathcal{D}} x,&& n_d([\min_{\mathcal{D}} x;\max_{\mathcal{D}} x]) \approx 2^{63} \frac{\log (\max_{\mathcal{D}} x) - \log (\min_{\mathcal{D}} x)}{\log \doubleMax - \log \doubleMin},\\
\mathrm{if} \; \min_{\mathcal{D}} x < \max_{\mathcal{D}} x< 0,&& n_d([\min_{\mathcal{D}} x;\max_{\mathcal{D}} x]) \approx n_d([-\max_{\mathcal{D}} x;-\min_{\mathcal{D}} x]),\\
\mathrm{if} \; \min_{\mathcal{D}} x < 0 < \max_{\mathcal{D}} x,&& n_d([\min_{\mathcal{D}} x;\max_{\mathcal{D}} x]) \approx n_d([\min_{\mathcal{D}} x;-\doubleMin])+ n_d([\doubleMin, \max_{\mathcal{D}} x]).
\end{eqnarray*}
If $n_d(\min_{\mathcal{D}} x , \max_{\mathcal{D}} x)/E < t_n$, the average number of distinct values that can be encoded per $\epsilon$-bin is below the threshold, and we replace $E=10^9$
by $E=\lceil 10^9 \times n_{d,\epsilon}([a;b])/t_b \rceil$ to get $\epsilon$-bins with enough distinct values per bin and keep a smooth continuous behavior of computer real values.

Note that when this numerical limit is reached, the separability of the values cannot be improved by splitting it into subsets, and we will consider the related data set as PWCH.

\subsection{Two-level heuristic}
\label{sec:outlierHeuristic}

If a data set $\mathcal{D}$ is practically well conditioned for histograms (PWCH), no significant loss of numerical precision is to be feared and we can compute a standard histogram.
Conversely, if it is PICH, we propose in Algorithm~\ref{algoOutliers} a two level heuristic that exploits the $\log_{\mathcal{D}}^{(cr)}$ function.

\begin{algorithm} [!htbp]
\caption{Two-level heuristic}
\label{algoOutliers}
\begin{algorithmic} [1]
  \REQUIRE $\mathcal{D}, E$
  \ENSURE $\mathbf{H}(\mathcal{D})$ 
	\STATE {\bf First level}
  \STATE compute the optimal histogram on $\log_{\mathcal{D}}^{(cr)}(\mathcal{D})$ to obtain a log-histogram $\mathbf{H}(\log_{\mathcal{D}}^{(cr)}(\mathcal{D}))$
	\STATE let $\mathbb{L}_\mathcal{D}=\{D_i\}$ be the list of adjacent data subsets $D_i$ of $\mathcal{D}$ related to the log intervals $i$ of $\mathbf{H}(\log_{\mathcal{D}}^{(cr)}(\mathcal{D}))$
	\STATE \COMMENT{Simplify the list $\mathbb{L}_\mathcal{D}$ by merging as much as possible adjacent intervals}
	\REPEAT
  \FOR{each pair of data subsets $(\mathcal{D}_i,\mathcal{D}_{i+1}) \in \mathbb{L}_\mathcal{D}$}
	\IF {$\mathcal{D}_i \cup \mathcal{D}_{i+1}$ is PWCH}
  \STATE replace $\mathcal{D}_i$ and $\mathcal{D}_{i+1}$ by $\mathcal{D}_i \cup \mathcal{D}_{i+1}$ in $\mathbb{L}_\mathcal{D}$
	\ENDIF
  \ENDFOR
	\UNTIL{no pair of adjacent subsets can be merged into a PWCH subset}
	\STATE \COMMENT{Split the remaining PICH data subsets of $\mathbb{L}_\mathcal{D}$}
  \FOR{each data subset $\mathcal{D}_i \in \mathbb{L}_\mathcal{D}$}
	\IF {$\mathcal{D}_i$ is PICH}
	\STATE split $\mathcal{D}_i$ into $k_i$ data subsets $\mathcal{D}_{i,k}$ using the splitting method described previously
  \STATE replace $\mathcal{D}_i$ by $\{\mathcal{D}_{i,k}\}_{1 \leq k \leq k_i}$ in $\mathbb{L}_\mathcal{D}$
	\ENDIF
  \ENDFOR
	
  \item[]
	\STATE {\bf Second level}
	\STATE \COMMENT{Compute sub histograms per data subset of $\mathbb{L}_\mathcal{D}$}
	\FOR{each data subsets $\mathcal{D}_i \in \mathbb{L}_\mathcal{D}$}
	\STATE build an optimal sub histogram $H_i$ of the data subset $\mathcal{D}_i$
	\ENDFOR
	\STATE \COMMENT{Concatenate the sub-histograms to initialize the output histogram $\mathbf{H}(\mathcal{D})}$
	\FOR{each sub histogram $H_i$}
	\STATE insert the intervals of $H_i$ in $\mathbf{H}(\mathcal{D})$
	\ENDFOR
	\STATE \COMMENT{Create boundary intervals to finalize the output histogram $\mathbf{H}(\mathcal{D})}$
	\FOR{each pair of sub histograms $(H_i, H_{i+1})$}
	\STATE let $Int_i^{last}$ be the last interval of $H_i$
	\STATE let $Int_{i+1}^{first}$ be the first interval of $H_{i+1}$
	\STATE let $Int_{i, i+1}^{empty}$ be the empty boundary between $Int_i^{last}$ and $Int_{i+1}^{first}$
	\STATE compute a boundary histogram $\mathbf{H}(\mathcal{D}_{i, i+1})$ for the data subset $\mathcal{D}_{i, i+1}=Int_i^{last} \cup Int_{i+1}^{first}$
	\STATE retrieve the boundary interval $Int_{i, i+1}^{boundary}$ of $\mathbf{H}(\mathcal{D}_{i, i+1})$ that includes $Int_{i, i+1}^{empty}$
	\STATE split the boundary data subset $\mathcal{D}_{i, i+1}$ into one, two or three intervals around $Int_{i, i+1}^{boundary}$
	\STATE replace the  two intervals $Int_i^{last}$ and $Int_{i+1}^{first}$ of $\mathbf{H}(\mathcal{D})$ by these new intervals
	\ENDFOR
\end{algorithmic}
\end{algorithm}

The first level splits the initial PICH data set into smaller data subsets, using a histogram built on the log transformation of the data. The resulting PWCH data subsets are then merged as far as possible, so as to obtain the largest possible PWCH data subsets for the second level. As for the PICH data subsets, they correspond to large intervals in the log-histogram. They are split into smaller data subsets according to the heuristic described at the end of this section, in order to obtain as few possible data subsets that are likely to be PWCH.

The second level produces sub histograms for all the PWCH data subsets resulting from the first level.
It is noteworthy that this two level heuristic could be used with any alternative histogram method in case of issues with outliers or with numerical precision limits.
In our case, we exploit the G-Enum method in the first level, with the sole purpose to split the whole numerical domain into sub domains. And we apply the G-Enum method in the second level, to build optimal sub histograms within each PWCH sub domain.
Boundary histograms are built between each pair of consecutive data subsets, by focusing on the last interval of the first data subset and the first interval of the second data subset. This allows to create a new interval to fill the boundary gap between the data subsets, and to finalize the global output histogram by replacing the two initial boundary intervals by one, two or three intervals. Note that each new built interval exploits the granularity parameter of its origin data subset.

The overall computational complexity of the two level algorithm is O$(n \log n)$, as all its components are based on algorithms with the same complexity, applied either to the whole data set or to the list of its data subsets.

We expect that the first level might help identifying outliers, as in the example of Section~\ref{sec:outliersLimits}.
Furthermore, we also expect that the suggested heuristic might be able to split numerical domains with heavy tail distribution into PWCH sub-domains.
As an example, let us consider an hypothetical data set with the weight of many organisms ranging from bacteria to insects and mammalians. The first level is likely to divide the numerical domain into at several sub-domains, that can then effectively be handled for the construction of specialized sub-histograms.

\paragraph{Splitting method for PICH data subsets obtained from the log space.}
Let $\mathcal{D}_i$ be a data subset related to an interval of the log-histogram $\mathbf{H}(\log_{\mathcal{D}}^{(cr)}(\mathcal{D}))$.
Let us first assume that $\mathcal{D}_i$ correspond to a positive interval $[a;b]$, with $0 < a < b$.
As $[a;b]$ is an interval of a histogram obtained  in the log space, we can assume a uniform density in $[\log a;\log b]$.
Our goal is to split $[\log a;\log b]$ into $k_i$ sub intervals of equal width in the log space, such that each sub interval $I_k$ is likely to be PWCH in the initial space.
Let $I_k=[\log a_{k-1}; \log a_k], 1 \leq k \leq k_i$, with $\log a_k=\log a + \frac{k}{k_i}(\log b - \log a)$.
Let $n_i$ be the number of data entries in $\mathcal{D}_i$.
As the density is assumed to be uniform in $[\log a;\log b]$, we can expect that the frequency of each sub interval $I_k$ is $n_{i, k} \approx n_i/k_i$.
In the initial space, the related data subsets are in intervals $[a_{k-1}; a_k]$, with $a_k/a_{k-1}$ being a constant as $(\log a_k-\log a_{k-1})$ is a constant.
As the uniform density in the log space translates into a decreasing density in the initial space, the most frequent bin in the initial space is likely to be the first one.
Using the threshold $t_E=\sqrt E \log E$ for the PICH criterion, the first bin of $[a_{k-1}; a_k]$ is $[a_{k-1}; a_{k-1} + (a_k-a_{k-1})/t_E]$.
Its frequency $n_{i, k}^\epsilon$ can be estimated using the uniform density assumption in the log space, according to 

\begin{eqnarray*}
n_{i, k}^\epsilon &\approx& n_{i, k} \frac{\log(a_{k-1}+(a_k-a_{k-1})/t_E)-\log a_{k-1}}{\log a_k-\log a_{k-1}},\\
   &\approx& \frac {n}{k_i} \; \frac{\log (1+(a_k/a_{k-1}-1)/t_E)}{\log a_k/a_{k-1}}.
\end{eqnarray*}
The frequency of the first bin of each data sub set is the same, so that the PICH criterion is likely to be triggered in the same way for all the data subsets related to the intervals $I_k, 1 \leq k \leq k_i$ .
We are searching for the smallest $k_i$, such that each data subset is PWCH, that is $n_{i, k}^\epsilon < \log n_{i, k}$.
As this might be complex to solve analytically, we suggest to solve this problem by dichotomy for $k_i \in \{2, n_i\}$ by computing all the values of $n_{i, k}^\epsilon$ and $\log n_{i, k}$. This can be done in O$(\log n_{i, k})$ computation time.

In the end, we can split our initial PICH data subset $\mathcal{D}_i$ into $k_i$ smaller data subsets that are likely to be PWCH.
The same method can be applied in the case of a data subset $\mathcal{D}_i$ with negative values, with $a < b < 0$. And in the case of a data subset with both positive and negative values, we can apply the method to both the negative and positive sub parts of the data subset.

\section{Preliminary analysis}
\label{sec:analysis}
In this section, we analyze some choices and properties relative to the two-level method.

\subsection{Threshold for being ill conditioned for histograms}
\label{sec:thresoldICH}

Let us first recall that the parameter $E$ of the G-Enum method is a fixed constant $E=10^9$ constrained by the limit $\intMax$ of computer integers.
For a data set $\mathcal{D}$ of size $n$, the ICH threshold is obtained for $gr(\mathcal{D}) = E$. 
In this section, we investigate on whether real world data sets are likely to be WCH given their size $n$ and the fixed constant $E=10^9$. Then, we evaluating the ICH and PICH criterions for the detection of ICH data sets.

\paragraph{Using a uniform distribution.}
Let us consider a data set $\mathcal{D}$ sampled from a uniform distribution on $[0,1]$.
The range of $\mathcal{D}$ is 1 while its expected precision is given by the expected minimum distance between its $n$ points, which is $$\frac{1}{n^2 -1}.$$

(See for example
\url{https://math.stackexchange.com/questions/1999612/average-minimum-distance-between-n-points-generate-i-i-d-with-uniform-dist}.)

We have 
\begin{eqnarray*}
gr(\mathcal{D}) = E &\Leftrightarrow& n^2-1 = E,\\
    &\Leftrightarrow& n \approx \sqrt{E}.
\end{eqnarray*}
The ICH threshold for a uniform distribution is attained for $E \approx \sqrt{n}$, that is for $n \approx 31,600$.

\paragraph{Using a Gaussian distribution.}
Let us now consider a Gaussian distribution as an example of peaked distribution.
As the Gaussian distribution may not be suitable for easily interpretable closed-form formulas, we focus instead on the binomial distribution, which can be approached asymptotically by a Gaussian distribution.

Let us consider a histogram with $(b+1)$ bins, where each bin $i$ is of width 1 and has a frequency equal to the binomial coefficient ${{b} \choose {i}}$. Let us denote $\mathcal{D}_b$ the artificial data set related to this histogram.
Let us assume that $n$ is a power of 2 with $n=2^b$.
We have
\begin{equation*}
n = (1+1)^b = \sum_{i=0}^b{{{b} \choose {i}}}.
\end{equation*}

The following formula enlightens the relation of the data set $\mathcal{D}_b$ with the Binomial distribution $B(n, p=1/2)$.
\begin{equation*}
n = 2^b \sum_{i=0}^b{{{b} \choose {i}}p^i(1-p)^{1-i}}.
\end{equation*}

As $b$ increases, the shape of this histogram converges to that of the normal distribution.
We assume a piecewise constant density per bin.
The range of $\mathcal{D}_b$ is $(b+1)$, which is the total width of the histogram.
As for the precision of $\mathcal{D}_b$, we assume that the central bin, which is the denser one, may be used to provide an approximation of the minimum difference between two consecutive values.
This central bin of index $b/2$ contains ${{b} \choose {b/2}}$ data entries.
Assuming a piecewise constant density within this bin, we apply the preceding results assuming a uniform distribution in the central bin.
We get 
\begin{eqnarray*}
rn(\mathcal{D}_b) &=& b+1,\\
pr(\mathcal{D}_b) &=& \frac{1}{{{b} \choose {b/2}}^2-1},\\
gr(\mathcal{D}_b) &=& (b+1)\left({{b} \choose {b/2}}^2-1\right).
\end{eqnarray*}

Using the Stirling formula $n!=\sqrt{2 \pi n} \left(\frac{n}{e}\right)^n + O(\frac{1}{n})$, we have
\begin{eqnarray*}
{{b} \choose {b/2}} &=& \frac {\sqrt{2 \pi b} \left(\frac{b}{e}\right)^b}
                              {\pi b \left(\frac{b}{2 e}\right)^b}  + O(\frac{1}{n}),\\
       &\approx& 2^b \sqrt{\frac{2}{\pi b}}.
\end{eqnarray*}

In terms of $n$ rather than $b$, we obtain
\begin{eqnarray*}
rn(\mathcal{D}_b) &=& \log_2{n}+1,\\
pr(\mathcal{D}_b) &\approx& \frac{\pi \log_2{n}}{2 n^2},\\
gr(\mathcal{D}_b) &\approx& \frac{2}{\pi} (1 + \frac{1}{\log_2{n}}) n^2.
\end{eqnarray*}

To get back to our initial problem of estimating the granular length for a Gaussian distribution, we have to apply some normalization.
The binomial distribution $X \sim \mathsf{B}(b, p)$ can be approximated by the normal distribution $X \sim \mathcal{N}(\mu=b p, \sigma = \sqrt{bp(1-p)})$.
Using $(X-\mu)/\sigma \sim \mathcal{N}(0, 1)$ and assuming a piecewise constant density per bin, let use consider the virtual data set $\mathcal{D}$ obtained by normalizing the data entries of $\mathcal{D}_b$ according to $Y=(X-\mu)/\sigma$ and generating data entries according to a uniform distribution within each bin.
These assumptions allow us to estimate the range, precision and granular length of $\mathcal{D}$, as approximations of these quantities for a Gaussian distribution.
Given that $b=\log_2 n$ and $p=1/2$, we get
\begin{eqnarray*}
rn(\mathcal{D}) &=& 2\frac{\log_2{n}+1}{\sqrt{\log_2 n}},\\
pr(\mathcal{D}) &\approx& \frac{\pi \sqrt{\log_2{n}}}{n^2},\\
gr(\mathcal{D}) &\approx& \frac{2}{\pi} (1 + \frac{1}{\log_2{n}}) n^2.
\end{eqnarray*}

The obtained approximation of the granular length for the Gaussian distribution is slightly smaller than that of the uniform distribution. Indeed, the approximation is likely to provide a lower bound since it relies on the assumptions that the precision of the data set can be evaluated from the central bin only and that the density is constant within this bin.

Let us finally approximate the ICH threshold:
\begin{eqnarray*}
gr(\mathcal{D}) = E &\Leftrightarrow& \frac{2}{\pi} (1 + \frac{1}{\log_2{n}}) n^2 \approx E,\\
	 	 &\Leftrightarrow& n \sqrt{1 + 1/log_2{n}} \approx \sqrt{\pi/2} \times \sqrt{E}.
\end{eqnarray*}

This approximation provides a ICH threshold for the Gaussian distributions, with the same order of magnitude as for the uniform distribution.
Let us remind that the ICH threshold is both translation and scale invariant and will be the same for any Gaussian distribution.
Using a numerical evaluation for $E=10^9$,  we get a threshold of $36,300$ data entries for Gaussian distributions.

\paragraph{Approach based on collisions.}
Another approach to evaluate the ICH threshold consists in evaluating when a histogram $\epsilon$-bin is likely to contain at least two distinct data entries.
For a data set $\mathcal{D}$ sampled from a uniform distribution, we are looking whether two distinct data entries among $n$ are likely to fall into the same $\epsilon$-bin among $E$.
This is known as the \emph{birthday problem}, which is to compute an approximate probability that in a group of $n$ people, at least two have the same birthday in a year with $E$ days.
This problem has been extensively studied in the literature (see Wikipedia for a summary of the results).
Below is an approximation of the threshold $n$ for having a probability above $\frac{1}{2}$ that one $\epsilon$-bin contains two data entries:
\begin{eqnarray*}
n &\approx& 1/2 + \sqrt{1/4+ 2 \log{2} \times {E}}.
\end{eqnarray*}
For $E=10^9$, the threshold for the birthday problem is $37,200$.

Altogether, the three approaches based on the uniform distribution, the Gaussian distribution and the detection of bin collision provide the same order of magnitude, $n \approx \sqrt{E}$, that is about $30,000$ data entries per data set for $E=10^9$.

\paragraph{Experimental evaluation of the ICH property.}
To confirm the theoretical insights provided in the previous sections, we perform numerical experiment to evaluate the empirical granular length of data sets sampled from a uniform or a Gaussian distribution.
We generate $10,000$ data sets with $n \in [1,000; 100,000]$ with increasing sizes using a geometric increment of $\sqrt{2}$.
As expected, the empirical granular length suffers from a very large standard deviation. Indeed, the empirical standard deviation is between 10 and 50 times larger than the mean, and the mean itself is between 5 and 20 times larger than the median.
We thus chose to report the median of the granular lengths in Figure~\ref{fig:study_ICH_gr}.
We also report the theoretical approximation of the granular length for the uniform distribution, as well its lower bound for the Gaussian distribution.

\begin{figure}[htbp!]
\begin{center}
\includegraphics[trim={0 2.5cm 0 1.5cm 0}, width=0.70\columnwidth]{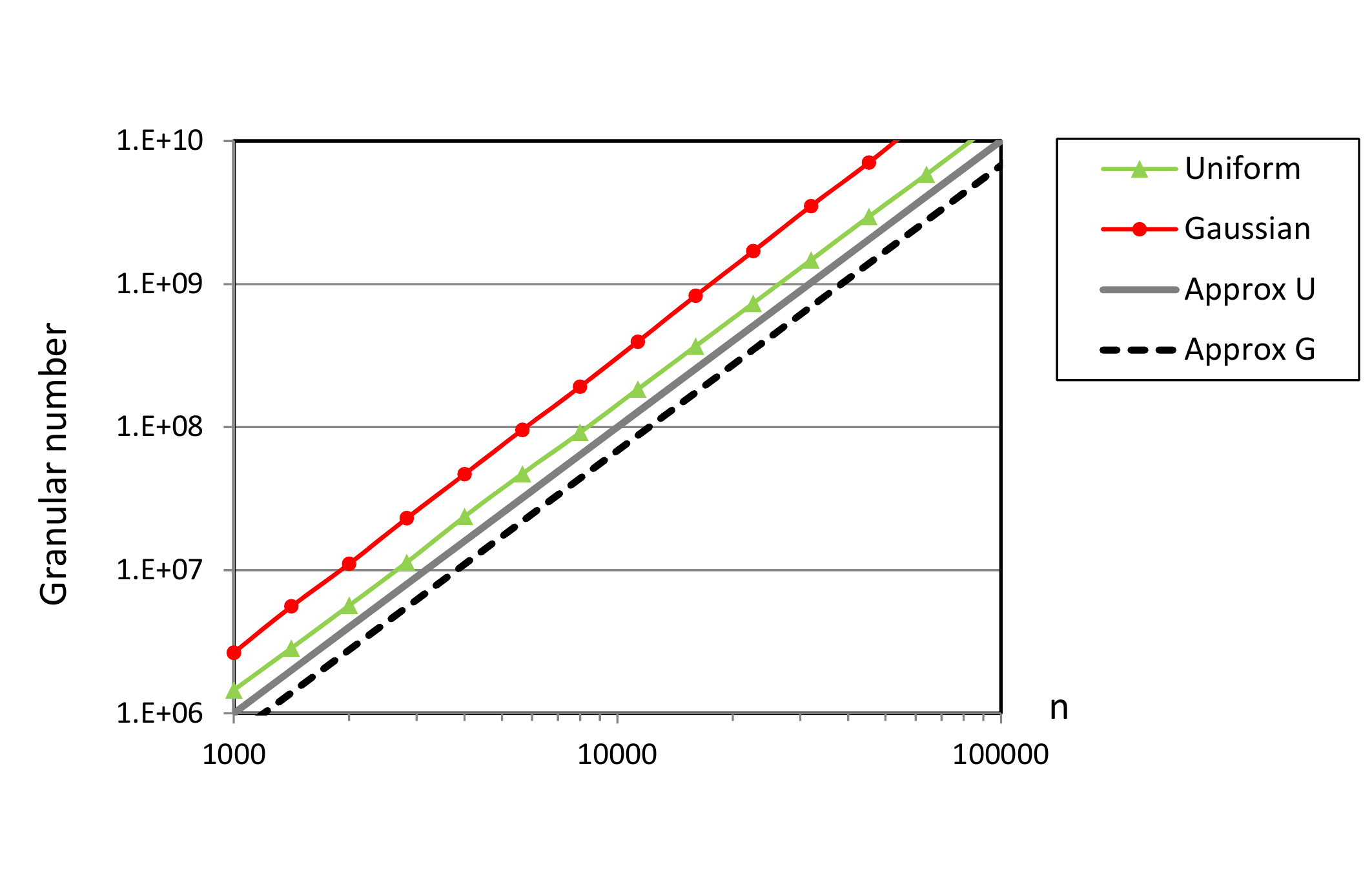}
\end{center}
\caption{Median granular length for the uniform and Gaussian distributions.}
\label{fig:study_ICH_gr}
\end{figure}

For the uniform distribution, the empirical median of the granular length is close to its approximation, with a ratio of around $1.5$.
For the Gaussian distribution, the empirical median of the granular length is about 5 times the approximation, which is a lower bound as expected.
And the Gaussian distribution has a granular length between 2 and 3 times that of the flat uniform distribution.
Overall, this confirms that the order of magnitude of the granular length grows as the square of the size of the data set. The median of the granular length goes beyond the $\epsilon$-bin length of histograms ($E=10^9$) beyond $n \approx 20,000$, but this criterion suffers from a tremendously large variance.

We also collect the number of collisions, that is the number of data entries that share their bin with another data entry of different value and cannot be separated using a histogram.
We report in Figure~\ref{fig:study_ICH_collisions} the mean and standard deviation of the collision numbers
This criterion is more stable than the granular number, but still even small data set may have some collisions.
Not surprisingly, the collision number is larger with the Gaussian than with the uniform distribution, as both the Gaussian range is larger and its precision is likely to be smaller because of the higher density in the Gaussian peak.

\begin{figure}[htbp!]
\begin{center}
\includegraphics[trim={0 2.5cm 0 1.5cm 0}, width=0.70\columnwidth]{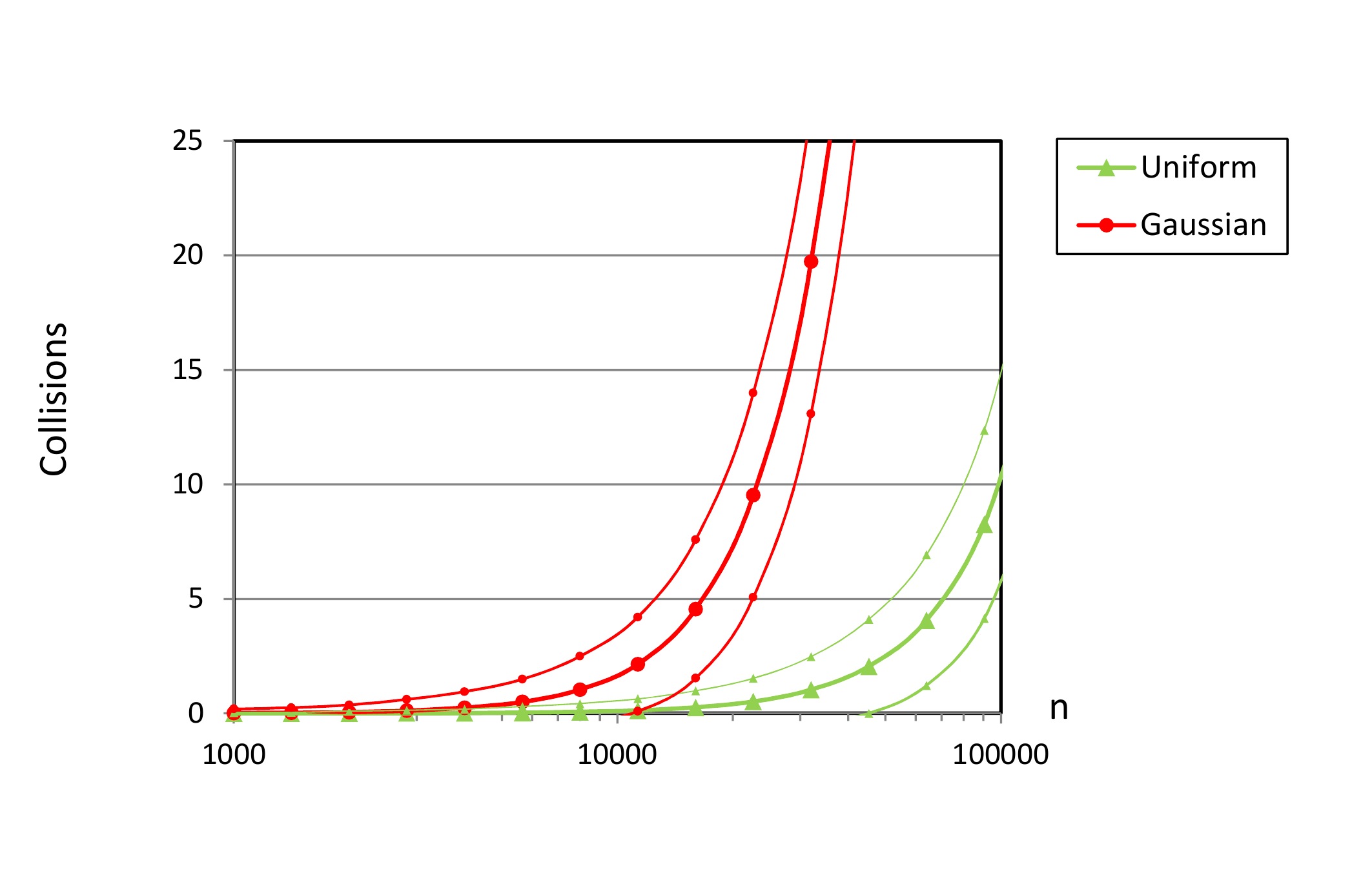}
\end{center}
\caption{Mean collision number for the uniform and Gaussian distributions.}
\label{fig:study_ICH_collisions}
\end{figure}

Finally, we study the behavior of two criterions for detecting the ICH property of a data set:
\begin{enumerate}
	\item ICH: number of collisions is greater or equal than 1,
	\item RICH: (robust ICH) number of collisions is greater than $\log n$,
\end{enumerate}

We report in Figure~\ref{fig:study_ICH_criterions} the proportion of data sets (among $10,000$) that are detected as ICH, according to the ICH and RICH criterions.
The ICH criterion exhibits a very large variance and results in a rather small threshold for the size of the data  detected as ICH (between $10,000$ and $30,000$ for a probability $50\%$ of detection).
The RICH criterion is more robust at the expense of a potential small loss of precision. 
The variance is far smaller with almost no ICH detection for sizes below $10,000$, larger detection size ($\approx 15,000$) in the case of the Gaussian distribution and far larger detection size ($\approx 100,000$) in the case of the uniform distribution.

\begin{figure}[htbp!]
\begin{center}
\includegraphics[trim={0 2.5cm 0 1.5cm 0},width=0.49\columnwidth]{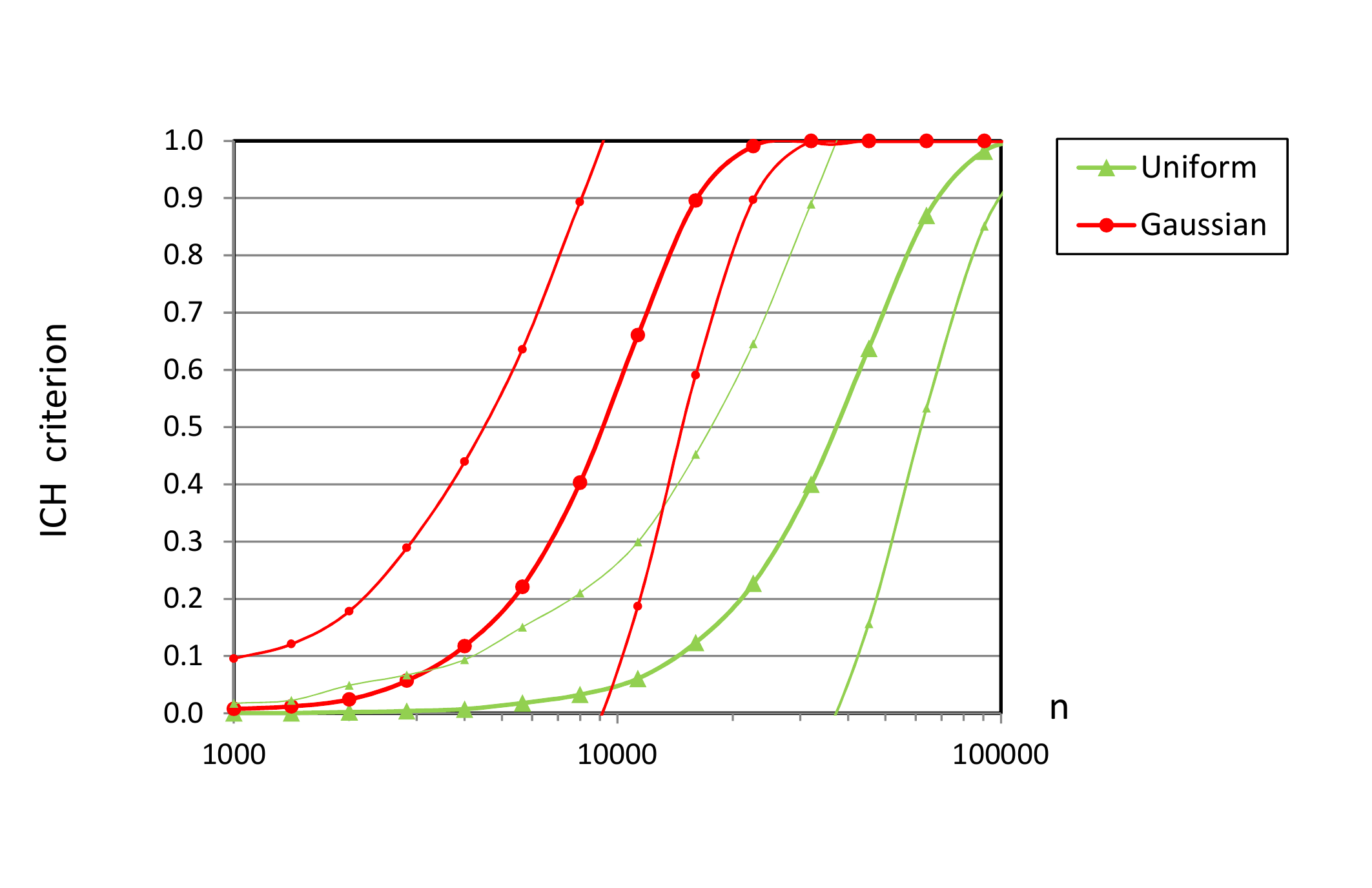}
\includegraphics[trim={0 2.5cm 0 1.5cm 0},width=0.49\columnwidth]{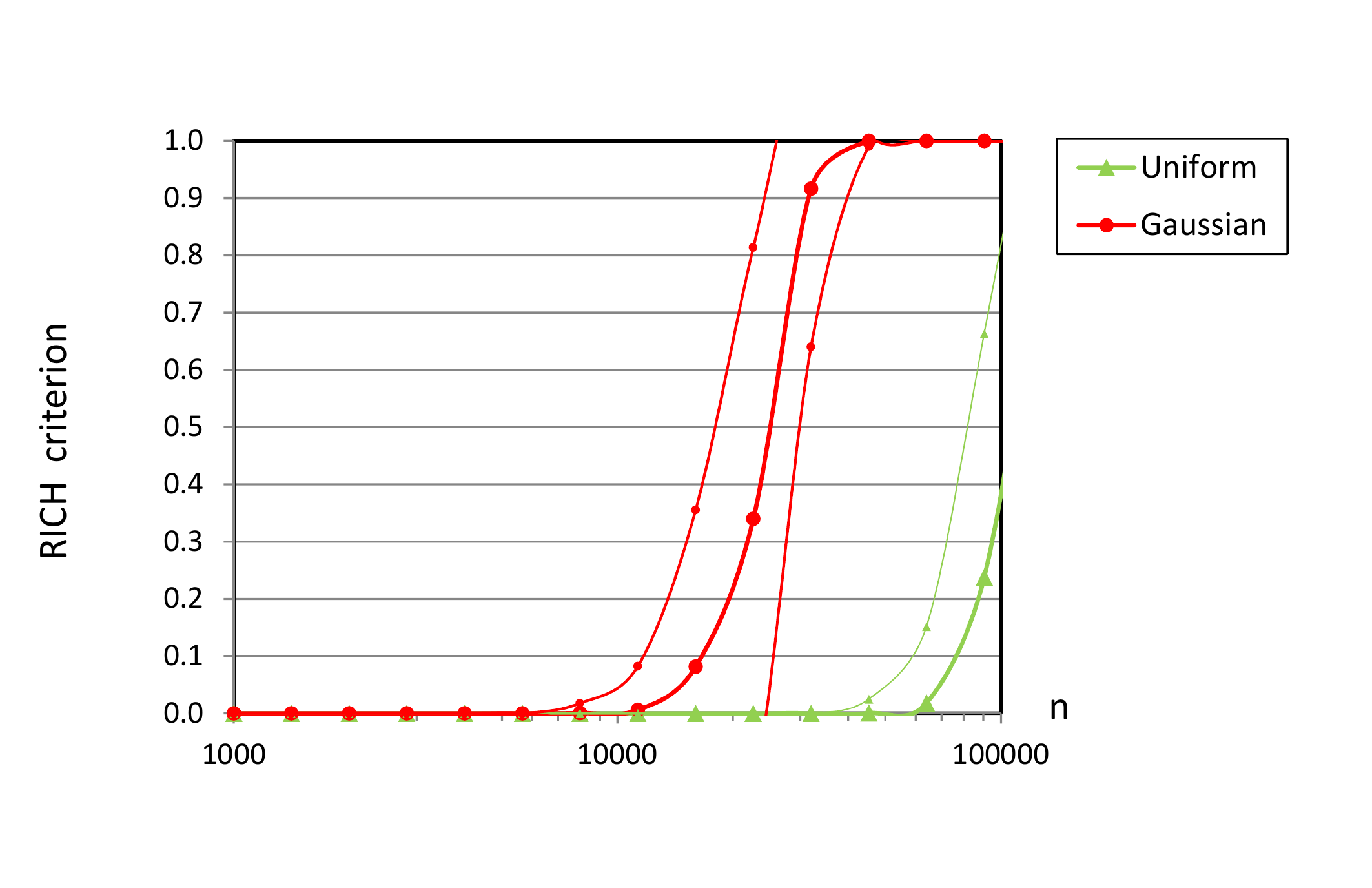}
\end{center}
\caption{Probability of detection of ICH data sets using the ICH and RICH criterions.}
\label{fig:study_ICH_criterions}
\end{figure}

\paragraph{Experimental evaluation of the PICH criterion.}

The PICH criterion introduced in Section~\ref{sec:wchProperty} is triggered if at least one colliding bin within a granularized histogram with $t_E=\sqrt{E}\log{E}$ bins contains more than $\log{n}$ data entries. 
We report in Table~\ref{table:PICH} the minimum data set size $n$ for a probability of $50\%$ of detection of the ICH property for different threshold $t_E$.

\begin{table}[!htb]
\caption{PICH criterion: minimum data set size $n$ for a probability of $50\%$ of detection}
\begin{center} \begin{tabular}{ccc}
\hline\\
$t_E$ & Uniform distribution & Gaussian distribution \\
\hline\\
$E$ & $> 10^9$ & $\approx 6.\; 10^7$ \\
$\sqrt{E} \log{E}$ &  $\approx 6.\; 10^4$ &  $\approx 2.\; 10^6$ \\
$\sqrt{E}$ & $\approx 4.\;10^3$ &  $\approx 8.\; 10^4$\\
\hline
\end{tabular}\end{center}
\label{table:PICH}
\end{table}

For $t_E=E$, the PICH criterion is triggered only for very large data sets.
For $t_E=\sqrt{E}$, the PICH criterion is triggered for rather small data sets w.r.t. usual real world data sets.
Using $t_E=\sqrt{E}\log{E}$ looks a good trade-off. The PICH criterion is not likely to be triggered too often and as $\sqrt{E}\log{E} \ll E$, the G-Enum algorithm is likely to find an optimal granularity $G$ far below $E$ and to get a stable behavior.

\begin{figure}[htbp!]
\begin{center}
\includegraphics[trim={0 2.5cm 0 1.5cm 0},width=0.6\columnwidth]{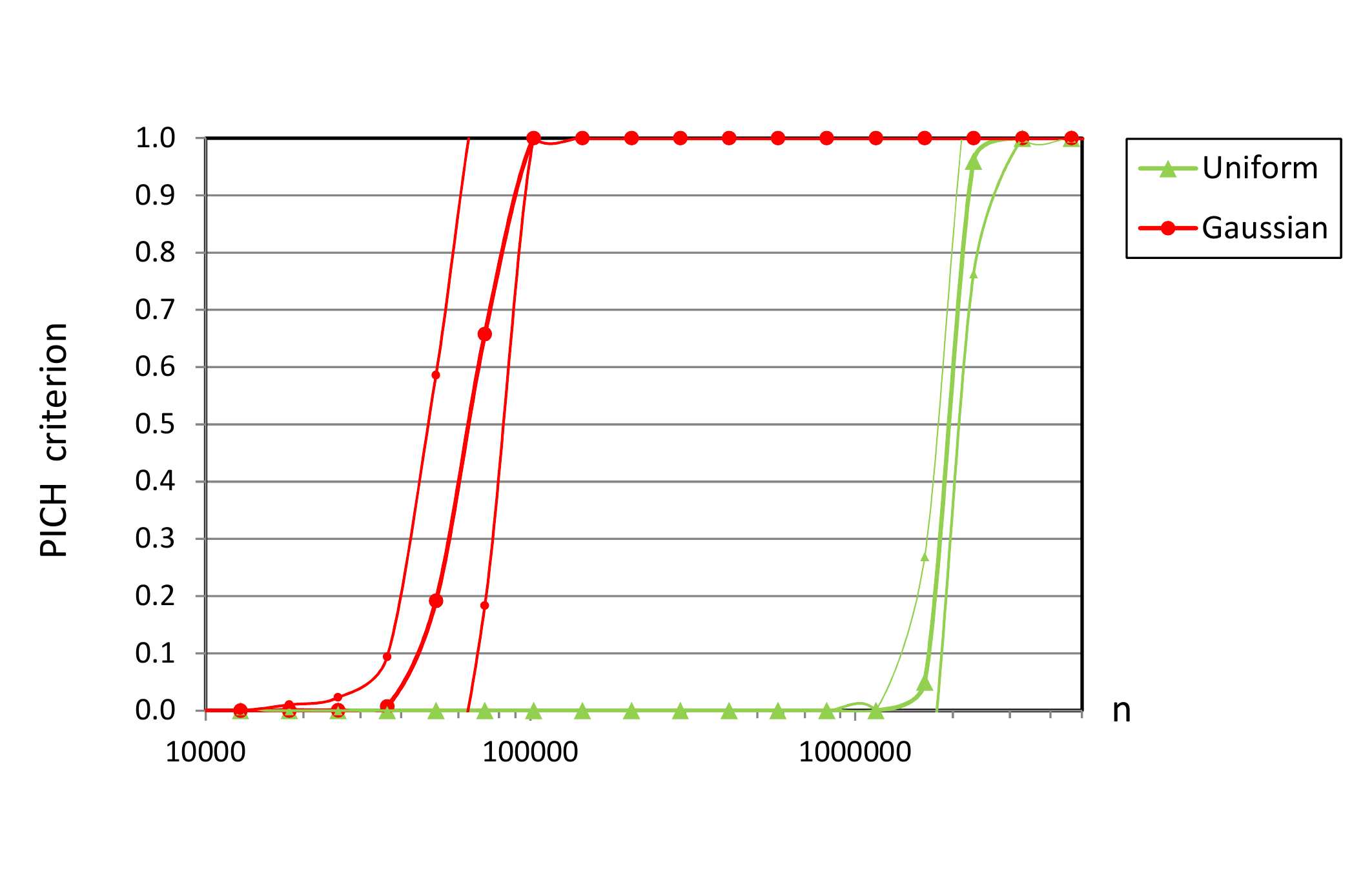}
\end{center}
\caption{Probability of detection of ICH data sets using the PICH criterion.}
\label{fig:study_PCH_criterions}
\end{figure}

We report in Figure~\ref{fig:study_PCH_criterions} the results of $10,000$ experiments with the mean and standard deviation of the PICH criterion for data sets from size $n=10,000$ to $50,000,000$.
This confirms that the PICH criterion is triggered for data sets of large enough size and has a moderate variance.

\paragraph{Synthesis.}
The ICH property of data set is triggered for data set sizes $n$ of around the square root of the number $E$ of $\epsilon$-bins of a histogram.
For $E=10^9$, this gives a size threshold of a few tens of thousands, but with a tremendously large variance.
The PICH criterion requires that at least one bin contains at least $t_c=\log{n}$ colliding data entries for $t_E=\sqrt{E} \log{E}$ elementary bins.
This more robust criterion pushes the threshold up to sizes of tens of thousands, at the expense of a negligible loss of precision of $\log{n}/n$ for the bounds of the intervals.
This makes the PICH criterion well suited for the use in the two level heuristic presented in Section~\ref{sec:outlierHeuristic}.

\subsection{Histogram on initial and log-transformed data sets}

In this section, we illustrate the impact of the log-transformation of data sets on the construction and visualization of histograms.

\begin{figure}[htbp!]
\begin{center}
\includegraphics[width=0.4\columnwidth]{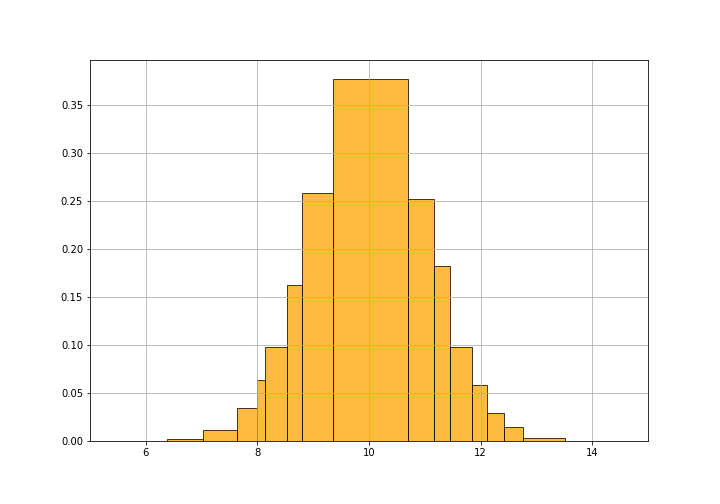}
\includegraphics[width=0.4\columnwidth]{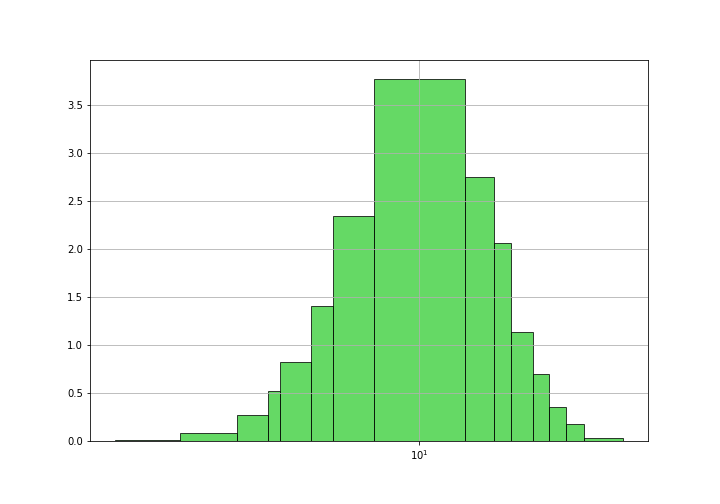}
\end{center}
\caption{Histograms built from $\mathcal{D}$ and visualized on the standard domain (left) and $\log^{(cr)}$ domain (right), for the Gaussian distribution $G(\mu=10, \sigma=1)$.}
\label{fig:histogramsG_10_1}
\end{figure}

Let $\mathcal{D}$ of size $n=10,000$ generated according to a Gaussian distribution $G(\mu=10, \sigma=1)$.
Figure~\ref{fig:histogramsG_10_1} presents a histogram built from $\mathcal{D}$  and visualized on the initial domain (left) and on the log transformed domain (right). For ease of read, the ticks and their label on the $X$ axis are reported with their initial values in $\mathcal{D}$.
On the left, the histogram is nicely balanced, as expected for a Gaussian distribution.
On the right, the histogram is unbalanced, which naturally comes from the log transformation of the data.

\begin{figure}[htbp!]
\begin{center}
\includegraphics[width=0.4\columnwidth]{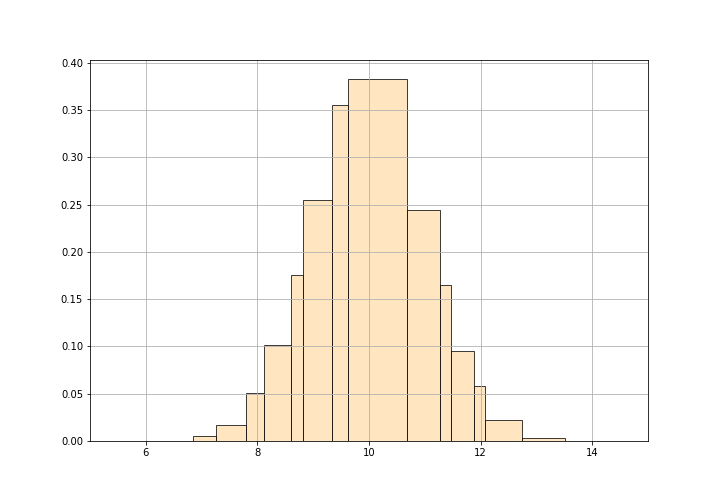}
\includegraphics[width=0.4\columnwidth]{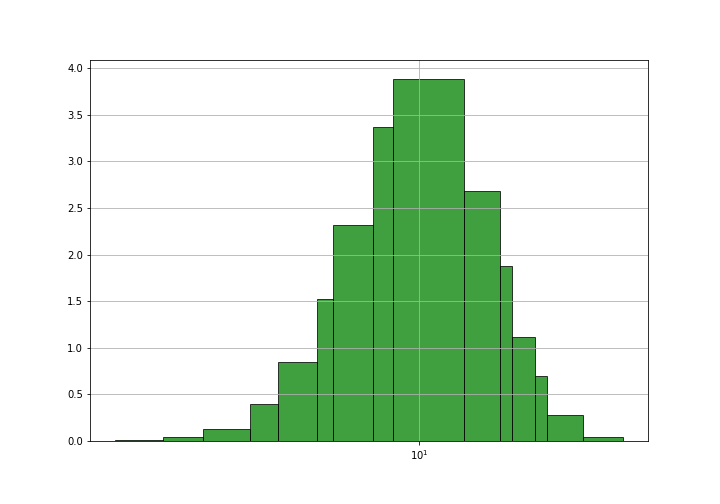}
\end{center}
\caption{Histograms built from $\log^{(cr)}(\mathcal{D})$ and visualized on the standard domain (left) and $\log^{(cr)}$ domain (right), for the Gaussian distribution $G(\mu=10, \sigma=1)$.}
\label{fig:histogramsLogG_10_1}
\end{figure}

Conversely, Figure~\ref{fig:histogramsLogG_10_1} presents a histogram built from $\log^{(cr)}(\mathcal{D})$ and visualized on the initial domain (left) and on the log transformed domain (right).
On the right, the histogram is unbalanced as expected in the logarithmic domain.
On the left, the histogram on the initial domain is awkwardly balanced because it was built on the other domain.

Let us note the flat upper lines of the histogram bars are consistent with the underlying piecewise constant density estimation. To maintain this consistency, histograms build on the initial domain should be represented on the logarithmic domain using upper lines with a logarithmic slope.
Conversely, histograms build on the logarithmic domain should be represented on the initial domain using upper lines with a decreasing slope.

\begin{figure}[htbp!]
\begin{center}
\includegraphics[width=0.4\columnwidth]{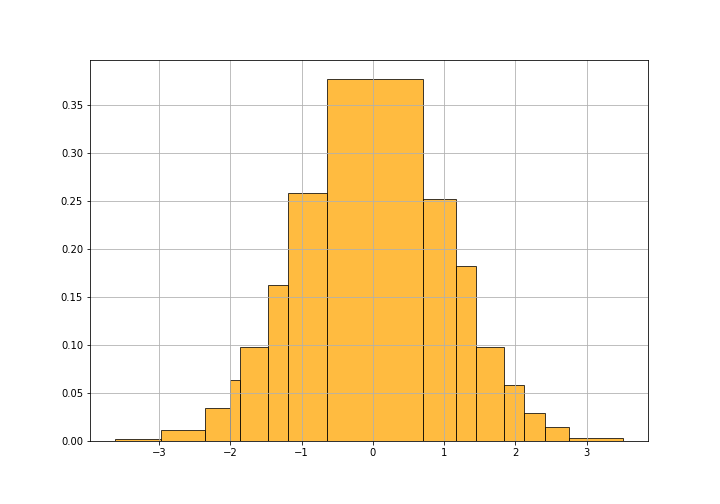}
\includegraphics[width=0.4\columnwidth]{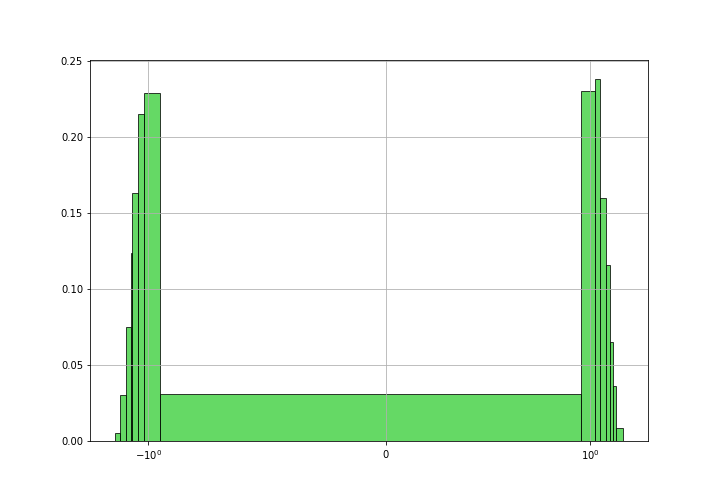}
\end{center}
\caption{Histograms built from $\mathcal{D}$ and visualized on the standard domain (left) and $\log^{(cr)}$ domain (right), for the Gaussian distribution $G(\mu=0, \sigma=1)$.}
\label{fig:histogramsG_0_1}
\end{figure}

\begin{figure}[htbp!]
\begin{center}
\includegraphics[width=0.4\columnwidth]{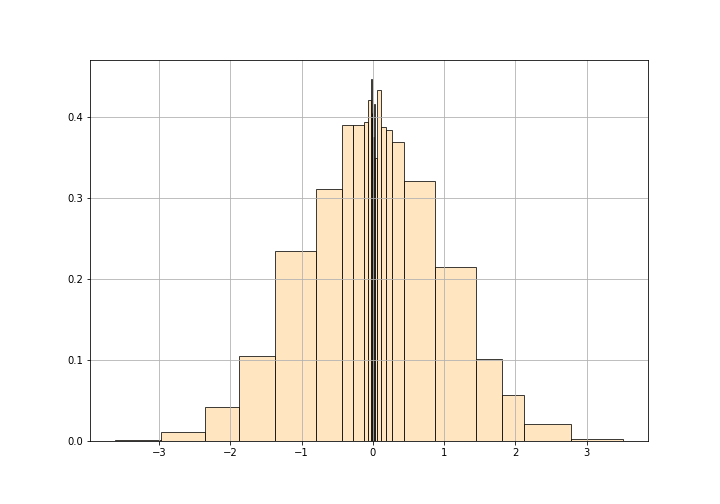}
\includegraphics[width=0.4\columnwidth]{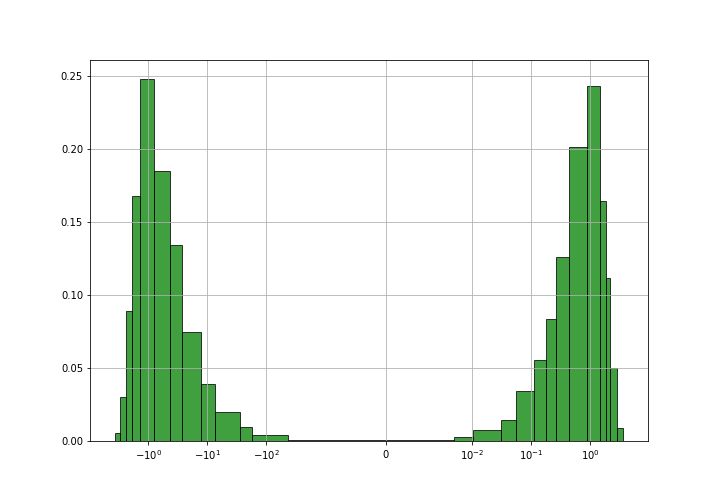}
\end{center}
\caption{Histograms built from $\log^{(cr)}(\mathcal{D})$ and visualized on the standard domain (left) and $\log^{(cr)}$ domain (right), for the Gaussian distribution $G(\mu=0, \sigma=1)$.}
\label{fig:histogramsLogG_0_1}
\end{figure}

Figure~\ref{fig:histogramsG_0_1} and Figure~\ref{fig:histogramsLogG_0_1} present the same visualizations in the case of a Gaussian distribution $G(\mu=0, \sigma=1)$ centered on 0, which is a singular point for the log transformation. The differences between the domain where the histogram is build and the one where it is visualized are now highly contrasted.

\medskip
To summarize, the log transformation of the data has the advantage of being usable for visualization of any data set, with either negative, null or positive data.
However, although any domain might be convenient for visualization purposes, histograms should be built on their own data domain.
In the case of the two-level heuristic, the log transformed domain is used only because of its appealing property to divided the initial data set into data subsets. The output histograms are built on the initial domain.

\subsection{Scale and translation invariance in $\mathbb{R}$ and $\mathbb{R}^{(cr)}$}

\begin{theorem}
\label{th:realLinearInvariance}
The optimal histogram built from the linear transformation of a data set $\mathcal{D} \subset \mathbb{R}$ is the same as the linear transformation of the optimal histogram built from $\mathcal{D}$, with the linear transformation of its interval bounds.
\end{theorem}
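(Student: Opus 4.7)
The plan is to exhibit an explicit bijection between the histogram models on $\mathcal{D}$ and those on its linear image $T(\mathcal{D})$, where $T(x)=ax+b$ with $a\neq 0$, and then to verify that this bijection preserves the G-Enum criterion term by term. Combined with uniqueness of minimizers (or minimal equivalence classes), this immediately yields the statement.

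First I would treat the case $a>0$. Observe that $T$ maps $[x_{\min},x_{\max}]$ onto $[ax_{\min}+b,\,ax_{\max}+b]$, so the domain length $L$ scales to $|a|L$. Since the G-Enum parameter $E$ is a fixed integer constant, the elementary precision $\epsilon=L/E$ rescales to $|a|\epsilon$, and hence the candidate endpoint set $\mathcal{C}$ is exactly $T$-transported from the one for $\mathcal{D}$. This gives a natural bijection between histogram models: to a model $\mathcal{M}=(K,C,\{h_k\})$ for $\mathcal{D}$ corresponds the model $\mathcal{M}'=(K,T(C),\{h_k\})$ for $T(\mathcal{D})$, with the same granularity $G$ and the same per-interval $g$-bin counts $G_k$, and with each data point $x_j$ mapped to the interval that $T(x_j)$ occupies in $\mathcal{M}'$.

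Next I would go through Table~\ref{tab:comp-GMODL} and check invariance. The indexing terms $\log^*K+\log^*G+\log\binom{G+K-1}{K-1}$ depend only on the discrete parameters $K,G$, which are preserved by the bijection. The multinomial terms depend only on $n$ and the frequencies $\{h_k\}$, again preserved since $T$ is a bijection of points preserving membership in intervals. Finally, the bin index term $\sum_k h_k\log G_k + n\log(E/G)$ depends only on $\{G_k\}$, $E$, $G$, all of which are unchanged. Hence the G-Enum cost of $\mathcal{M}$ on $\mathcal{D}$ equals the G-Enum cost of $\mathcal{M}'$ on $T(\mathcal{D})$, so an optimizer of one side maps under the bijection to an optimizer of the other.

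For $a<0$, the same argument applies after noting that $T$ reverses order: the interval $]c_{k-1},c_k]$ in $\mathcal{D}$ corresponds to $[T(c_k),T(c_{k-1})[$ in $T(\mathcal{D})$. The endpoint set $\mathcal{C}$ is still mapped bijectively (up to relabelling indices $k\mapsto K+1-k$), and the half-open convention at the left endpoint can be handled by the standard convention $[c_0,c_1]$ on the first bin; the G-Enum cost is invariant under this relabelling since all summations are symmetric in the interval index. The main subtlety, which is essentially the only non-routine point, is verifying that these boundary conventions (the $\epsilon/2$ offsets at $x_{\min},x_{\max}$ and the half-open vs.\ closed endpoints) transport coherently under a negative dilation; once this is checked, the equality of criteria is immediate and the theorem follows.
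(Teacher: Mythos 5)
Your proof is correct and follows the same route as the paper, which justifies the theorem precisely by the existence of a cost-preserving bijection between the space of histograms on $\mathcal{D}$ and on $f_{a,b}(\mathcal{D})$; you simply make that bijection explicit and verify term-by-term invariance of the G-Enum criterion, including the order-reversal bookkeeping for $a<0$ that the paper leaves implicit.
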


Theorem~\ref{th:realLinearInvariance} states that optimal histograms built from data sets in $\mathbb{R}$ are invariant under linear transformation $f_{a,b}(x) = a x + b$ of the data. 
This nice property stems from the existence of a bijection between the space of histograms that can be built from a data set $\mathcal{D}$ and the space of histogram that can be built from $f_{a,b}(\mathcal{D})$.

\begin{theorem}
\label{th:computerNoLinearInvariance}
The optimal histogram built from the linear transformation of a data set $\mathcal{D} \subset \mathbb{R}^{(cr)}$ is not always the same as the linear transformation the optimal histogram built from $\mathcal{D}$.
\end{theorem}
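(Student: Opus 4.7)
The plan is to prove Theorem~\ref{th:computerNoLinearInvariance} by exhibiting a counterexample, in contrast to the bijection-based argument underlying Theorem~\ref{th:realLinearInvariance}. The crux is that $\mathbb{R}^{(cr)}$ is a discrete finite set whose density of representable values is heavily non-uniform: as described in Section~\ref{sec:outliersWithFloatingPoint}, the absolute precision varies by more than $600$ orders of magnitude across the domain. Consequently, a linear transformation $f_{a,b}(x)=ax+b$ does not induce a bijection between the model space of histograms built on $\mathcal{D}$ and the model space of histograms built on $f_{a,b}(\mathcal{D})$: the available $\epsilon$-bin endpoints in $\mathcal{C}$ are a subset of $\mathbb{R}^{(cr)}$ and their image under $f_{a,b}$ is generally not an admissible endpoint set for $f_{a,b}(\mathcal{D})$. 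This breakdown of the bijection is the driver of the whole argument.

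First I would pick a small dataset $\mathcal{D}$ of size $n$ with values contained in a narrow range where $\mathbb{R}^{(cr)}$ behaves essentially as a continuum, for example $\mathcal{D} \subset [1,2]$ sampled so that $\mathcal{D}$ is PWCH and the G-Enum method returns a nontrivial optimal histogram $H$ with $K>1$ intervals. Second, I would apply a large scaling $a \approx 10^{300}$ (with $b=0$). In the image range, the absolute precision of $\mathbb{R}^{(cr)}$ is of order $10^{300-15}=10^{285}$, so the range $|a| \cdot rng(\mathcal{D}) \approx 10^{300}$ admits only around $10^{15}$ distinct representable values instead of the $2^{63}$ values available in the original range. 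Most of the scaled data entries collide into the same $\epsilon$-bin; the transformed dataset is PICH, so the two-level heuristic of Algorithm~\ref{algoOutliers} is triggered, building the histogram via a log-transformation split followed by per-subset G-Enum. Its endpoint set is determined by $\log_{\mathcal{D}}^{(cr)}$ and the granulation of the sub-histograms, and there is no reason for it to coincide with $f_{a,0}(C)$.

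To conclude, I would verify that the two resulting histograms genuinely differ, not merely in the optimization path but in their final bin structure. The cleanest way is to check that $f_{a,0}(H)$ cannot be expressed in the model space of $f_{a,0}(\mathcal{D})$: its endpoints $a\cdot c_k$, being images of a uniform $\epsilon$-grid on $[x_{min},x_{max}]$, fall between consecutive representable values of $\mathbb{R}^{(cr)}$ near $10^{300}$, and are therefore rounded to a coarser grid. After rounding, distinct intervals of $H$ may collapse, so $f_{a,0}(H)$ is not even a valid histogram model for $f_{a,0}(\mathcal{D})$, let alone the optimum. The main obstacle is this last step: one must be careful to argue in terms of the actual admissible model space rather than the idealized continuous one, and to exhibit at least one pair of adjacent endpoints of $H$ whose images are forced to coincide after rounding. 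Once this is established, $\textrm{OptHist}(f_{a,0}(\mathcal{D})) \neq f_{a,0}(\textrm{OptHist}(\mathcal{D}))$ follows immediately, proving the theorem.
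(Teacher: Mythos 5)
Your overall strategy --- a counterexample built on the failure of the bijection between the model spaces of $\mathcal{D}$ and $f_{a,b}(\mathcal{D})$ --- is exactly the mechanism the paper invokes, but the specific counterexample you construct does not work. You scale $\mathcal{D}\subset[1,2]$ by $a\approx 10^{300}$ and claim the image range ``admits only around $10^{15}$ distinct representable values instead of the $2^{63}$ values available in the original range,'' so that most entries collide. This miscounts both sides: floating-point representation has approximately \emph{constant relative precision}, so $[1,2]$ contains about $2^{52}\approx 4.5\times 10^{15}$ representable doubles (the figure $2^{63}$ is the count over the entire positive axis, not over one binade), and the image $[10^{300},\,2\times 10^{300}]$, which also spans a single binade, again contains about $2^{52}$ representable doubles. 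Multiplication by a constant within the representable range is essentially a bijection on the values of $\mathcal{D}$ up to half-ulp rounding; the range, the precision $pr(\mathcal{D})$, and hence the granular length $gr(\mathcal{D})$ all scale together, and the $\epsilon$-bin width $\epsilon=L/E\approx 10^{291}$ in the image remains far larger than the local ulp $\approx 10^{285}$. So no collisions are created, the PWCH/PICH status is unchanged, the two-level heuristic is not triggered, and your final verification step fails concretely: adjacent endpoints of $H$ are at least $\epsilon$ apart, so their images are at least $10^{291}\gg 10^{285}$ apart and never merge under rounding.

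The mechanisms that actually break linear invariance in $\mathbb{R}^{(cr)}$ are the ones the paper lists: overflow or underflow of the scaling (push $a$ beyond $\doubleMax/\max_{\mathcal{D}}|x|$ or below $\doubleMin/\max_{\mathcal{D}}|x|$), or \emph{translation absorption} --- take $a=1$ and $b$ so large that every $x\in\mathcal{D}$ is below half an ulp of $b$, whence $x+b=b$ in $\mathbb{R}^{(cr)}$ and $f_{1,b}(\mathcal{D})=\{b\}$. In either case the optimal histogram of the transformed data degenerates to a single interval while $f_{a,b}(H)$ retains $K>1$ intervals, which is the immediate contradiction you were after (intermediate values of $b$ give partial absorption of the low-order digits and genuine bin collisions). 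Replacing your large scaling by a large translation, or by an overflowing/underflowing scaling, repairs the argument and brings it in line with the paper's.
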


However, when it comes to computer real values with floating-point representation, Theorem~\ref{th:computerNoLinearInvariance} states that this is not longer true.
Indeed, there is no longer a bijection between $\mathcal{D}$ and $f_{a,b}(\mathcal{D})$ nor between their related space of histograms. Some of the potential impacts are given below as examples.
\begin{itemize}
  \item for $b=0$,
	\begin{itemize}
		\item if $a$ is too small, all values in $f_{a,b}(\mathcal{D})$ are underflow,
		\item if $a$ is too large, all values in $f_{a,b}(\mathcal{D})$ are overflow,
	\end{itemize}
	\item for $a=0$,
	\begin{itemize}
		\item if $b$ is too small, all values $x \in \mathcal{D}$ are such that $x+b=x$, resulting in $f_{a,b}(\mathcal{D})=\mathcal{D}$,
		\item if $b$ is too large, all values $x \in \mathcal{D}$ are such that $x+b=b$, resulting in $f_{a,b}(\mathcal{D})=\{b\}$.
	\end{itemize}
\end{itemize}

\medskip
Floating-point values allow an acceptable behavior on a wide range of real world applications, but their limits can produce unexpected results, as in the case of data sets with outliers.
Even methods with well grounded theoretical foundations may fail in some simple cases. Accounting for the limits of floating-point representation may help pushing the limits of these methods.

\section{Experimental evaluation}
\label{sec:evaluation}
In this section, we evaluate the impact of the two-level method on the quality of the built histograms using artificial data sets.
The quality of an histogram can be evaluated using a statistical distance between the underlying probability distribution and the histogram considered as a piecewise constant density estimator. Among the usual statistical distances are the Kullback-Leibler divergence, the Hellinger distance or the mean square error. However, some of these measures assume that the probability distribution has a density, which is disputable in the case of outliers. The scale of these measures may vary a lot depending on the data and the results are difficult to compare and interpret.
In the experiments, we rather exploit the number of intervals as an indirect measure of the quality of the histograms. Indeed, as the G-Enum method is regularized, it is not likely to overfit the data and the number of intervals appears to be highly correlated with the accuracy of the retrieved patterns. Lastly, this very simple measure is suitable for easy comparisons and interpretation.

\subsection{Resistance to one outlier}

The objective of this experiment is to evaluate the impact of one outlier on the quality of the built histograms.
We exploit a data set of size $n=10,000$ generated from a Gaussian distribution $G(\mu=1, \sigma=0.1)$.
We add one outlier with value $v_{out} = 2^i$ and consider all the 35 values from $v_{out} = 1$ to $v_{out} = 2^{34} \approx 1.7\;10^{10}$.
The experience is repeated 100 times, which represents 3,500 data sets.

\begin{figure}[htbp!]
\begin{center}
\includegraphics[width=0.7\columnwidth]{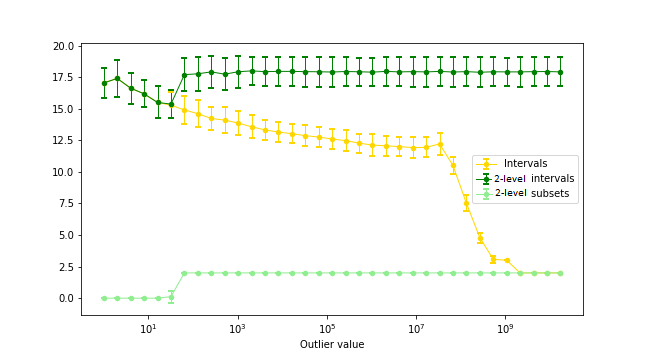}
\end{center}
\caption{Number of intervals obtained using or not the two-level method, for the Gaussian distribution $G(\mu=1, \sigma=0.1)$ and one outlier}
\label{fig:oneOutlier}
\end{figure}

Figure~\ref{fig:oneOutlier} reports the mean and standard deviation of the number intervals obtained using or not the two-level method. The number of data subsets considered by the method is reported as well.
For $v_{out}=1$, there are no outliers and the retrieved histogram contains around 17 interval to approximate the Gaussian distribution.
For small values of $v_{out}$, it is not clear whether $v_{out}$ is a point in the tail of the Gaussian distribution or an outlier value. Both the standard and the two-level methods build the same histograms with slightly less intervals, down to around 15 intervals for $v_{out}=32$.
For larger values of $v_{out}$, the standard method build less and less intervals, down to 12 intervals for 
$v_{out}\approx 3.\;10^7$, before a fast drop down to 2 intervals when all the Gaussian data entries collide in the first histogram bin. Conversely, the two-level method splits the data into two data subsets for $v_{out}> 32$ and builds a histogram consisting of about 18 intervals, 17 for the Gaussian data and one for the outlier.

\begin{figure}[htbp!]
\begin{center}
\includegraphics[width=0.32\columnwidth]{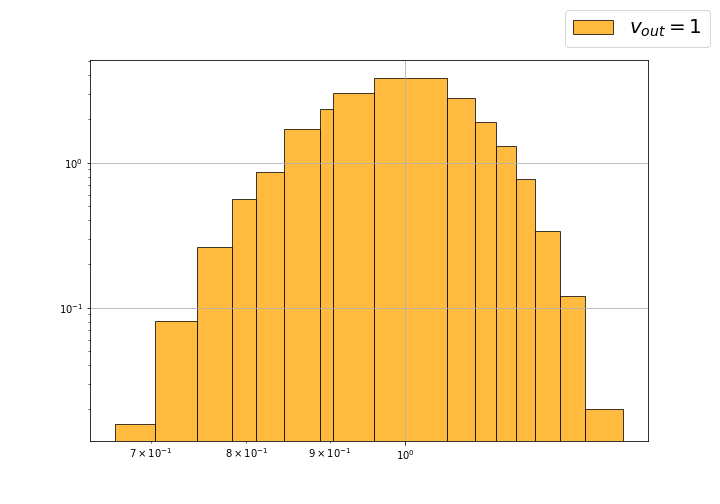}
\includegraphics[width=0.32\columnwidth]{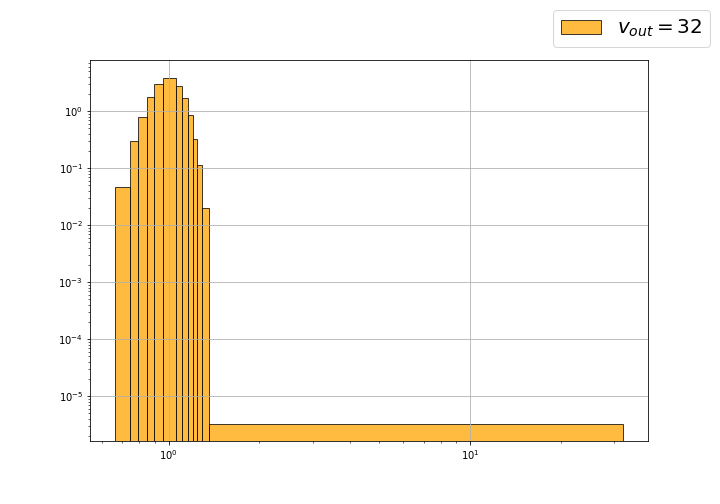}
\includegraphics[width=0.32\columnwidth]{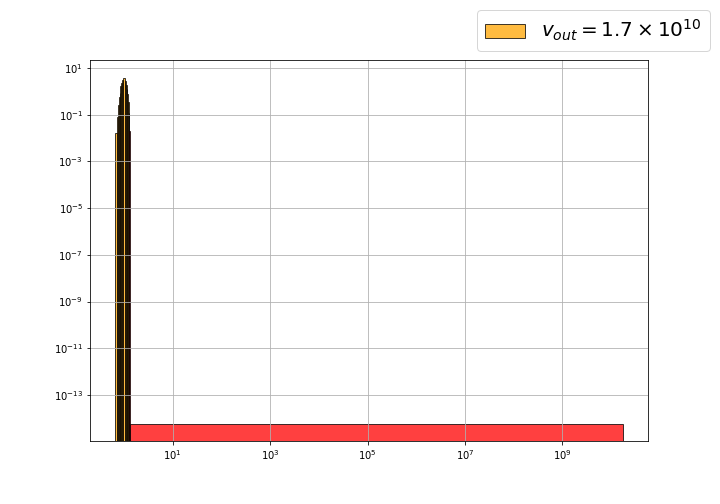}
\end{center}
\caption{Histograms obtained using the two-level method for the Gaussian distribution $G(\mu=1, \sigma=0.1)$ and different values of outlier, on the $\log \times \log$ scale. The boundary intervals are displayed in red in the case of several data subsets}
\label{fig:oneOutlierHistogramsLogLog}
\end{figure}

\begin{figure}[htbp!]
\begin{center}
\includegraphics[width=0.32\columnwidth]{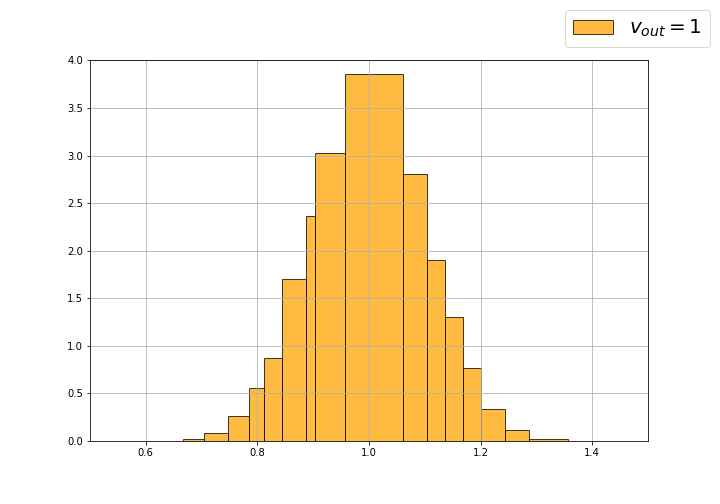}
\includegraphics[width=0.32\columnwidth]{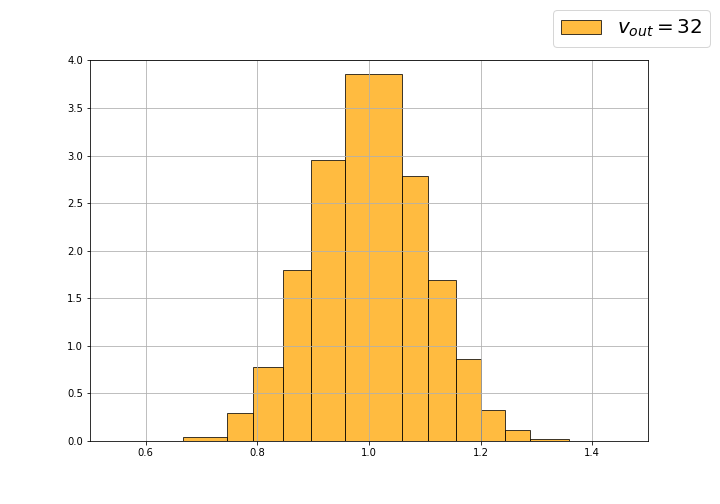}
\includegraphics[width=0.32\columnwidth]{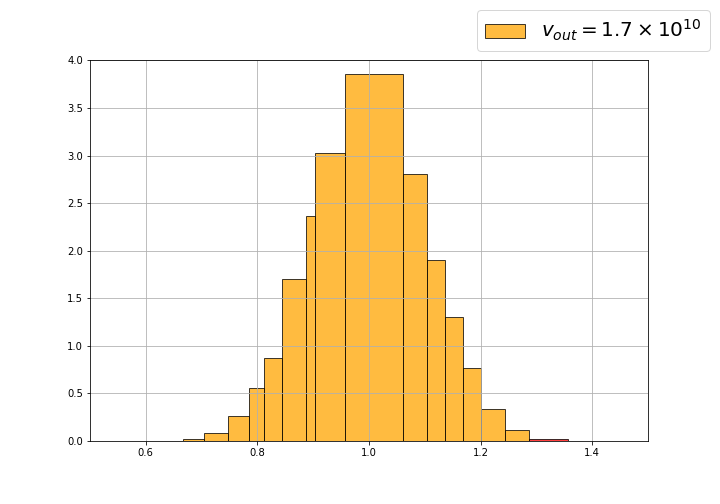}
\end{center}
\caption{Histograms obtained using the two-level method for the Gaussian distribution $G(\mu=1, \sigma=0.1)$ and different values of outlier, with a focus on $X \in [0.5; 1.5]$}
\label{fig:oneOutlierHistograms}
\end{figure}

The histograms built using the two-level method are displayed for $v_{out}=1, 32$ and $10^{10}$ using a $\log \times \log$ scale in Figure~\ref{fig:oneOutlierHistogramsLogLog} and using the standard scale with a focus on the Gaussian data in Figure~\ref{fig:oneOutlierHistograms}. The boundary intervals are displayed in red in the case of several data subsets. This shows that the main Gaussian distribution is correctly approximated whatever be the outlier value.

\subsection{Resistance to a distribution of outliers}

The objective of this experiment is to evaluate the impact of a distribution of outliers on the quality of the build histograms.
We exploit a data set of size $n=10,000$ generated from a Gaussian distribution $G(\mu=1, \sigma=0.1)$.
We add 100 outliers generated from a Gaussian distribution $G(\mu_O=1, \sigma_O=)$ with value $\sigma_O =  2^i \times 10^{-10}, 0 \leq i \leq 67$ and consider all the 68 values from $\sigma_O = 10^{-10}$ to $\sigma_O = 2^{67} \times  10^{-10} \approx 1.5\;10^{10}$.
The experience is repeated 100 times, which represents 6,800 data sets.

\begin{figure}[htbp!]
\begin{center}
\includegraphics[width=0.7\columnwidth]{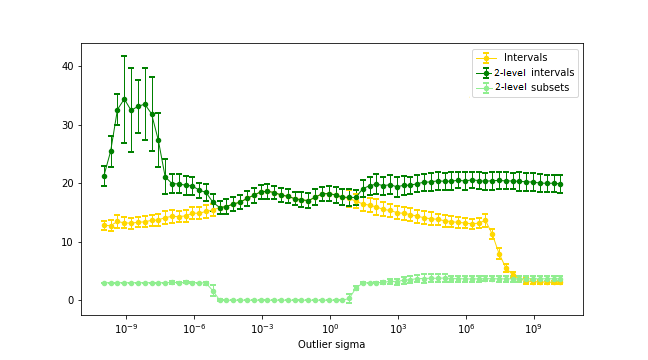}
\end{center}
\caption{Number of intervals obtained using or not the two-level method, for the Gaussian distribution $G(\mu=1, \sigma=0.1)$ and 100 outlier distributed according to a Gaussian distribution with same mean and a wide range of standard deviations}
\label{fig:distributionOutliers}
\end{figure}

Figure~\ref{fig:distributionOutliers} reports the mean and standard deviation of the number intervals obtained using or not the two-level method, as well as the number of involved data subsets.
Interestingly, three regimes can be observed with small transitions between them.
For $\sigma_O \in [10^{-5}; 3.5]$, the distribution of the outliers cannot be distinguished from the main Gaussian distribution and both the standard and two-level methods build the same histogram with 16 to 18 intervals.
For $\sigma_O \leq 10^{-6}$, both methods identify the distribution of outliers, which essentially reduces to one peak interval in the center of the main Gaussian data (cf. Figure~\ref{fig:distributionOutliersHistogramsLogLog}).
Contrary to the standard method, the two-level method splits the data set into three subsets, one for the central distribution of outliers surrounded by two other ones for the main Gaussian distribution. Three independent histograms are built for each subset, resulting in altogether, around 20 to 35 intervals.
For $\sigma_O \geq 6$, the standard method fails to correctly summarize the distribution when $\sigma_O \rightarrow \infty$. The two-level method splits the data set into three to four subsets, one for the main Gaussian data distribution distribution of outliers and the other ones for the outliers. Altogether, around 20 intervals are built.

\begin{figure}[htbp!]
\begin{center}
\includegraphics[width=0.4\columnwidth]{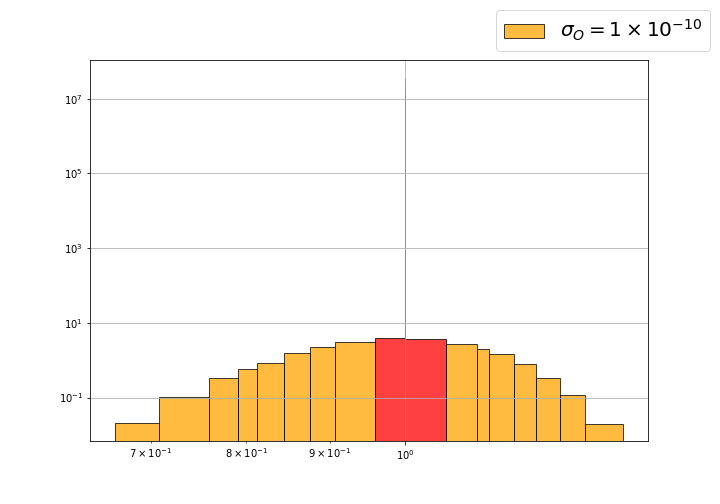}
\includegraphics[width=0.4\columnwidth]{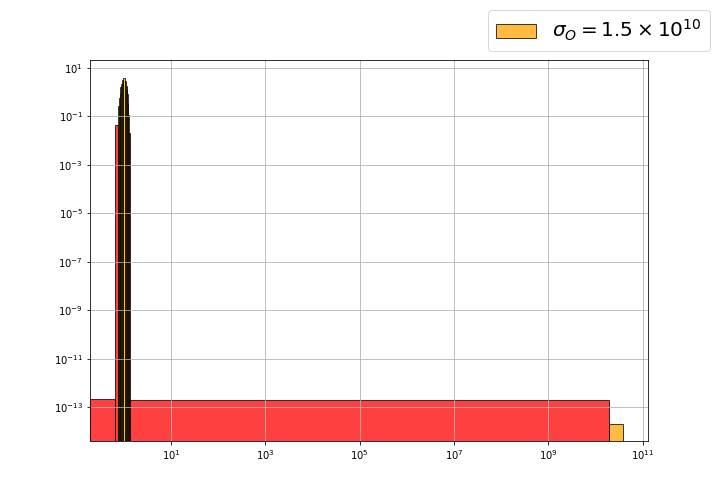}
\end{center}
\caption{Histograms obtained using the two-level method for the Gaussian distribution $G(\mu=1, \sigma=0.1)$ and different distributions of outliers, on the $\log \times \log$ scale. The boundary intervals are displayed in red in the case of several data subsets}
\label{fig:distributionOutliersHistogramsLogLog}
\end{figure}

\begin{figure}[htbp!]
\begin{center}
\includegraphics[width=0.4\columnwidth]{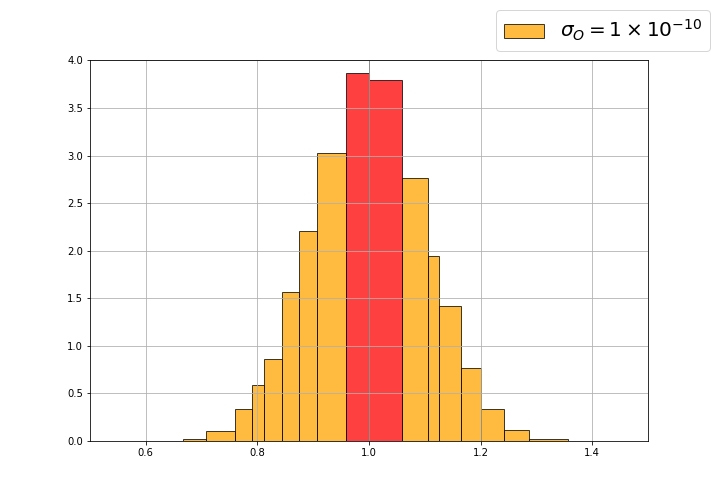}
\includegraphics[width=0.4\columnwidth]{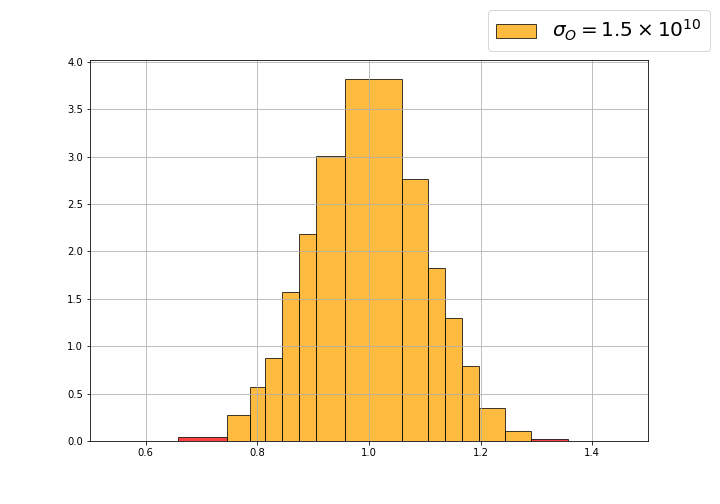}
\end{center}
\caption{Histograms obtained using the two-level method for the Gaussian distribution $G(\mu=1, \sigma=0.1)$ and different distributions of outliers, with a focus on $X \in [0.5; 1.5]$}
\label{fig:distributionOutliersHistograms}
\end{figure}

The histograms built using the two-level method are displayed for $\sigma_o=1 \times 10^{-10}$ and $1.5 \times 10^{10}$ using a $\log \times \log$ scale in Figure~\ref{fig:distributionOutliersHistogramsLogLog} and using the standard scale with a focus on the main Gaussian data in Figure~\ref{fig:distributionOutliersHistograms}.
This shows that the main Gaussian distribution is correctly approximated for a very large range of standard deviations of the outlier distribution.

\subsection{Data set with a heavy tail distribution}

The objective of this experiment is to evaluate the behavior of the method in the case of a data set with a heavy tail distribution.
We exploit a data set of size $n=20,000$ generated from a equidistributed mixture of two Gaussian components $G(\mu_1=1, \sigma_1=\mu_1/10)$ and $G(\mu_2, \sigma_2=\mu_2/10)$, where $\mu_2 = 2^i$.
We consider all the 35 values of from $\mu_2=1$ to $\mu_2 = 2^{34} \approx 1.7\;10^{10}$.
The experience is repeated 100 times, which represents 3,500 data sets.

\begin{figure}[htbp!]
\begin{center}
\includegraphics[width=0.7\columnwidth]{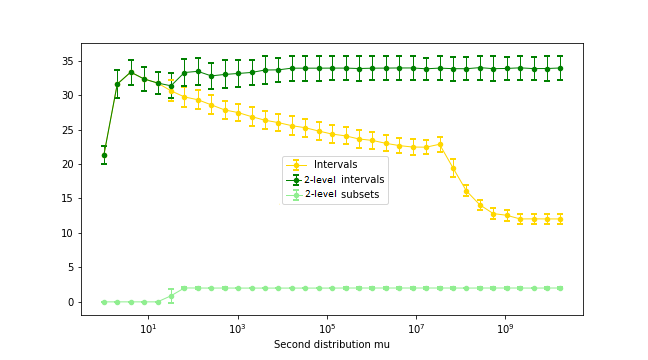}
\end{center}
\caption{Number of intervals obtained using or not the two-level method, for a mixture of two Gaussian distribution with far different ranges}
\label{fig:dynamicRange}
\end{figure}

Figure~\ref{fig:dynamicRange} reports the mean and standard deviation of the number intervals obtained using or not the two-level method, as well as the number of involved data subsets.
For $\mu_2=\mu1=1$, there is one single Gaussian distribution and both methods build around 21 intervals.
For $\mu_2 \in [2; 32]$, both methods build the same histogram to summarize the Gaussian mixture, using 31 to 33 intervals.
For $\mu_2 \geq 50$, the standard method suffers once again from the very large range of values in the data set. The two-level method splits the data set into two subsets, one per Gaussian component, and exploits around 34 intervals to summarize the underlying distribution.

\begin{figure}[htbp!]
\begin{center}
\includegraphics[width=0.4\columnwidth]{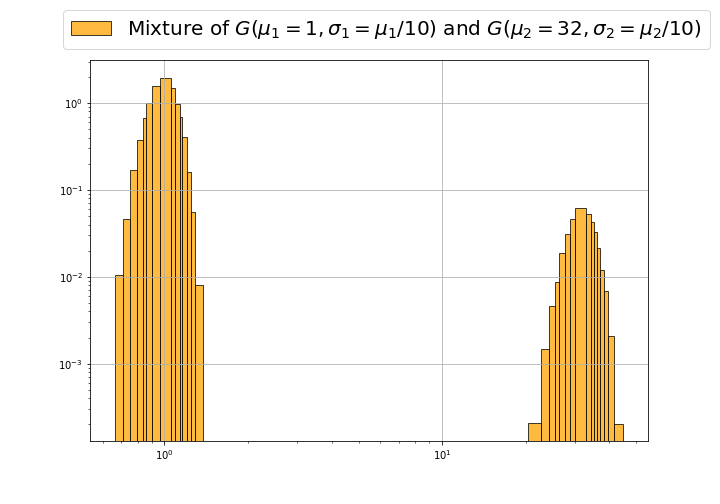}
\includegraphics[width=0.4\columnwidth]{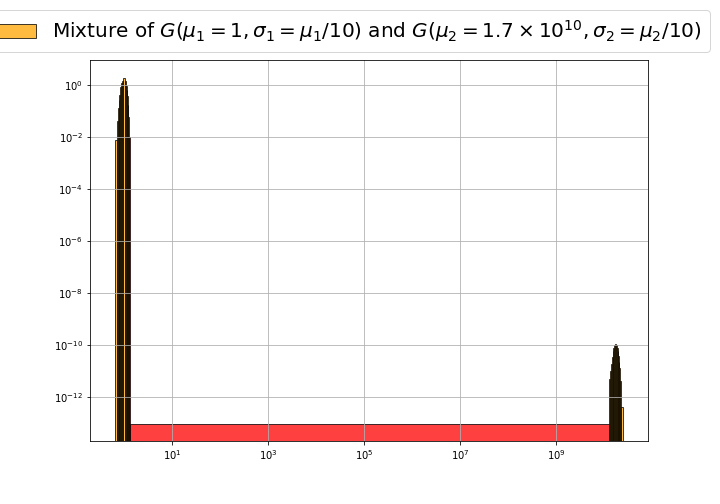}
\end{center}
\caption{Histograms obtained using the two-level method for the Gaussian mixture distribution, on the $\log \times \log$ scale}
\label{fig:gaussianMixtureHistogramsLogLog}
\end{figure}

\begin{figure}[htbp!]
\begin{center}
\includegraphics[width=0.4\columnwidth]{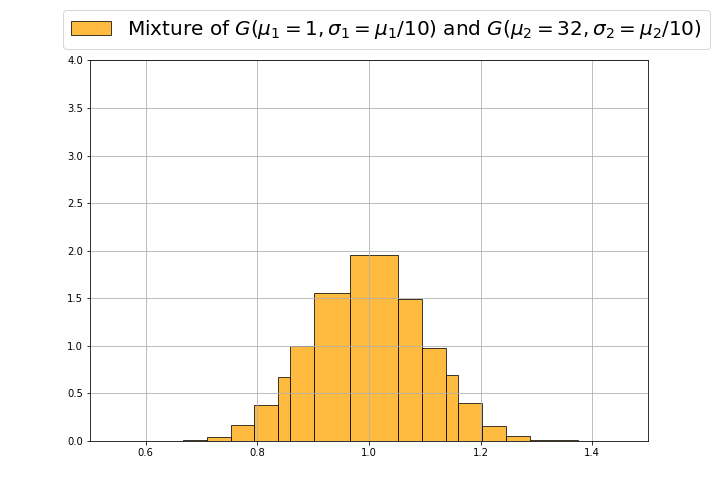}
\includegraphics[width=0.4\columnwidth]{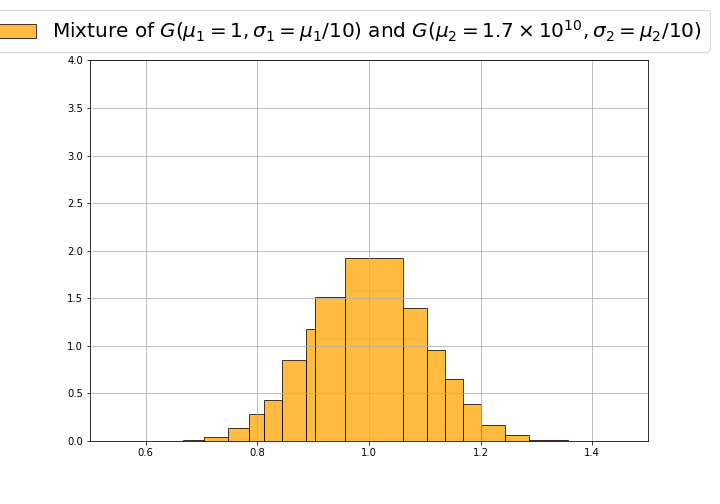}
\end{center}
\caption{Histograms obtained using the two-level method, for the Gaussian distribution $G(\mu=1, \sigma=0.1)$ and different distributions of outliers, with a focus on $X \in [0.5; 1.5]$}
\label{fig:gaussianMixtureHistograms}
\end{figure}

The histograms built using the two-level method are displayed for $\mu_2=32, 1.7 \times 10^{10}$ using a $\log \times \log$ scale in Figure~\ref{fig:gaussianMixtureHistogramsLogLog} and using the standard scale with a focus on the first Gaussian component in Figure~\ref{fig:gaussianMixtureHistograms}.
This shows that the Gaussian mixture distribution is correctly approximated for a very large range of values.

\subsection{Scalability}
\label{sec:scalability}

The objective of this experiment is to evaluate the scalability of the method in the case of a data set with a complex underlying distribution of values. 
We exploit a Gaussian mixture with 21 components where the mixture weights are distributed according to a Binomial distribution $B(n=20, p=0.5)$. We have $p(component=i)=\binom{20}{i} 2^{-20}$, with each mixture component based on a Gaussian distribution $G(\mu=i, \sigma=1/4)$.
We generate data sets from this distribution for size $n=2^i; 1 \leq i \leq 30$ ranging from 2 to one billion. The experiment is repeated only once for scalability reasons.

\paragraph{Accuracy of the histograms.}

\begin{figure}[htbp!]
\begin{center}
\includegraphics[width=0.48\columnwidth]{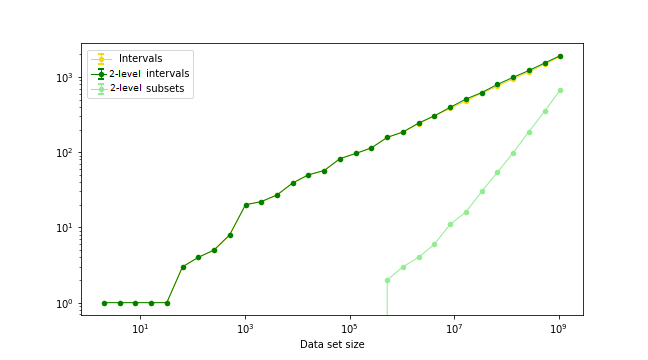}
\includegraphics[width=0.48\columnwidth]{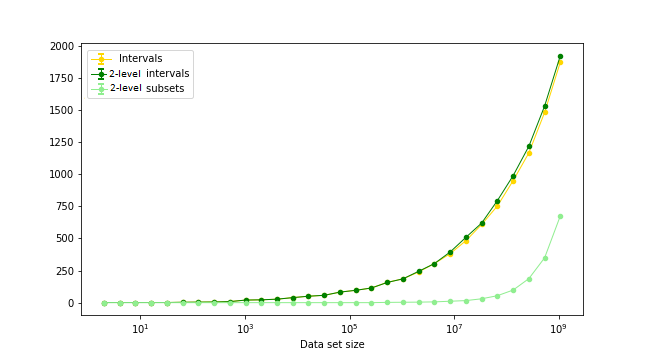}
\end{center}
\caption{Number of intervals obtained using or not the two-level method for large scale data sets, displayed using a log or standard scale}
\label{fig:scalabilityStudyIntervals}
\end{figure}

Figure~\ref{fig:scalabilityStudyIntervals} reports both on a standard and a log scale the mean and standard deviation of the number intervals obtained using or not the two-level method, as well as the number of involved data subsets.
The two-level method is triggered for data sets with size beyond half a million and the number of data sub sets then increases regularly until reaching around 700 for the largest data set of size one billion.
The number of intervals in the histogram increases approximately as the cubic root of the size of the data set. For example, about 100 intervals are built for $n=2^{17}\approx 1.3 \times 10^5$, and about 1000 intervals for $n=2^{27}\approx 1.3 \times 10^8$.
Both the standard and two-level methods build comparable numbers of intervals, as shown in Figure~\ref{fig:scalabilityStudyIntervals} on the standard scale display. The two-level method builds slightly more intervals for large data sets. Indeed, whereas the standard method is fully regularized on the whole data set, the two-level method exploit the G-Enum method independently per sub data set, resulting in a locally regularized approach.

\begin{table}[!htbp]
\begin{small}\begin{center} \begin{tabular}{cccc}
\includegraphics[width=0.24\columnwidth]{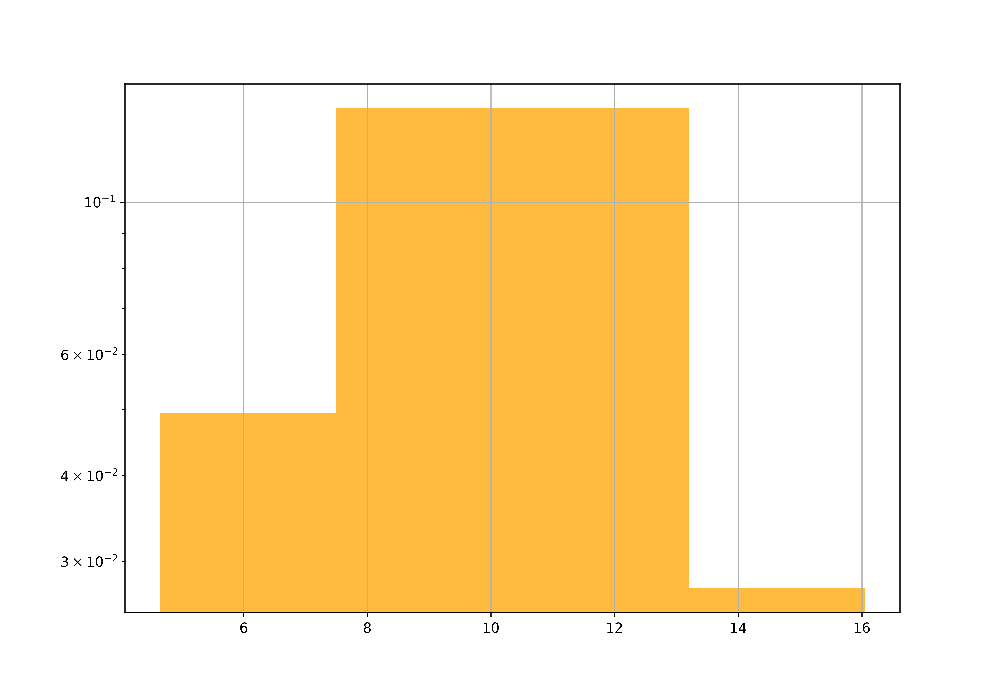}
&
\includegraphics[width=0.24\columnwidth]{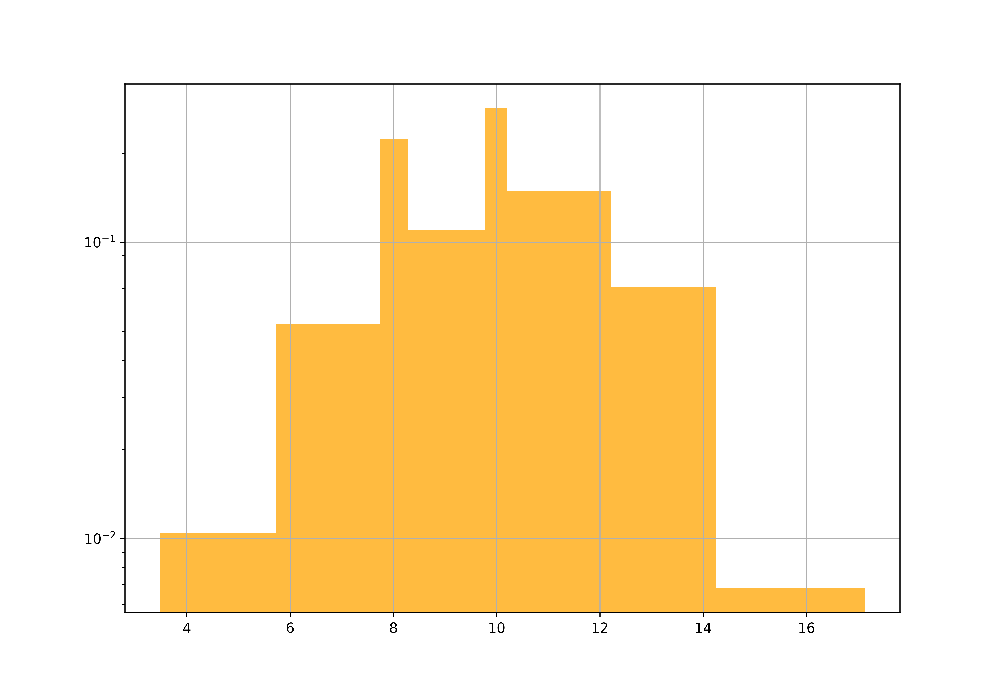}
&
\includegraphics[width=0.24\columnwidth]{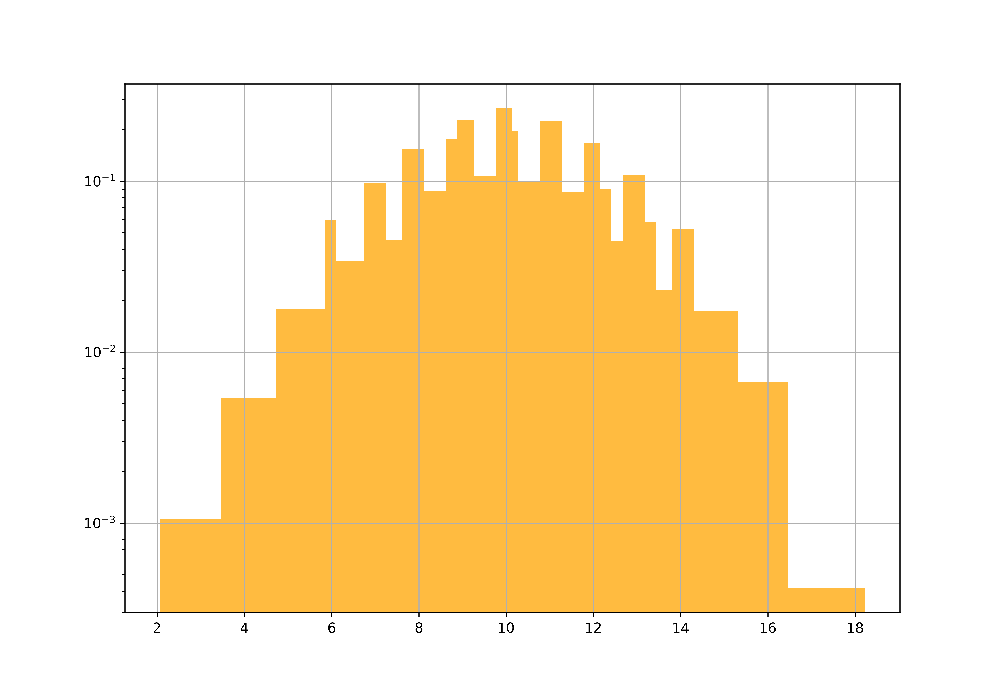}
&
\includegraphics[width=0.24\columnwidth]{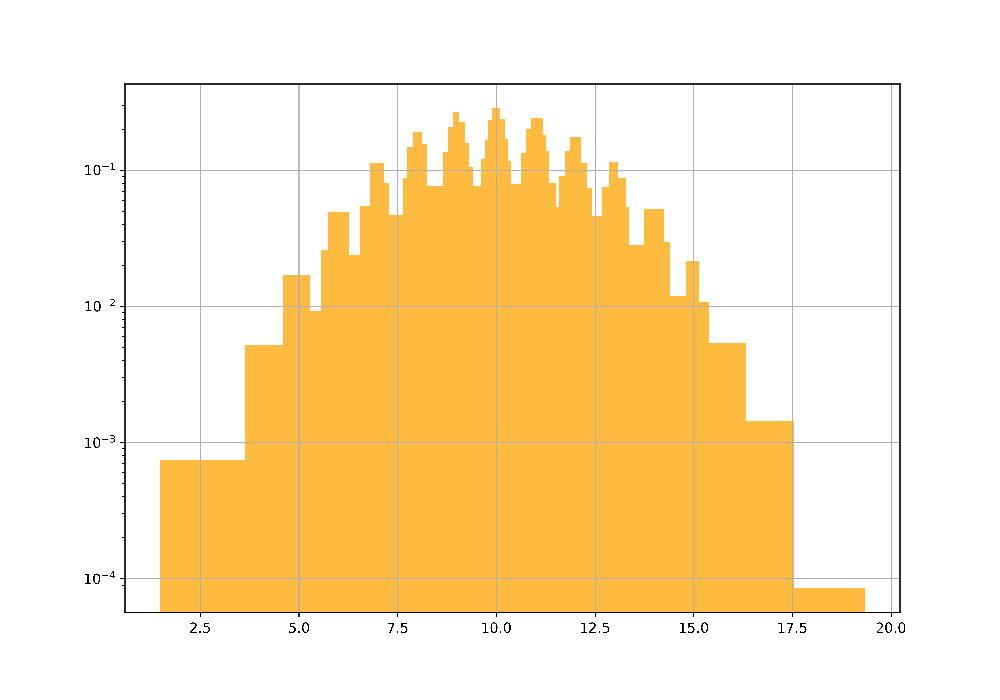}
\\
$n=6.4 \times 10^1 $ & $n=5.1 \times 10^2$ & $n = 4.1 \times 10^3$ & $n=3.3 \times 10^4 $ \\
\includegraphics[width=0.24\columnwidth]{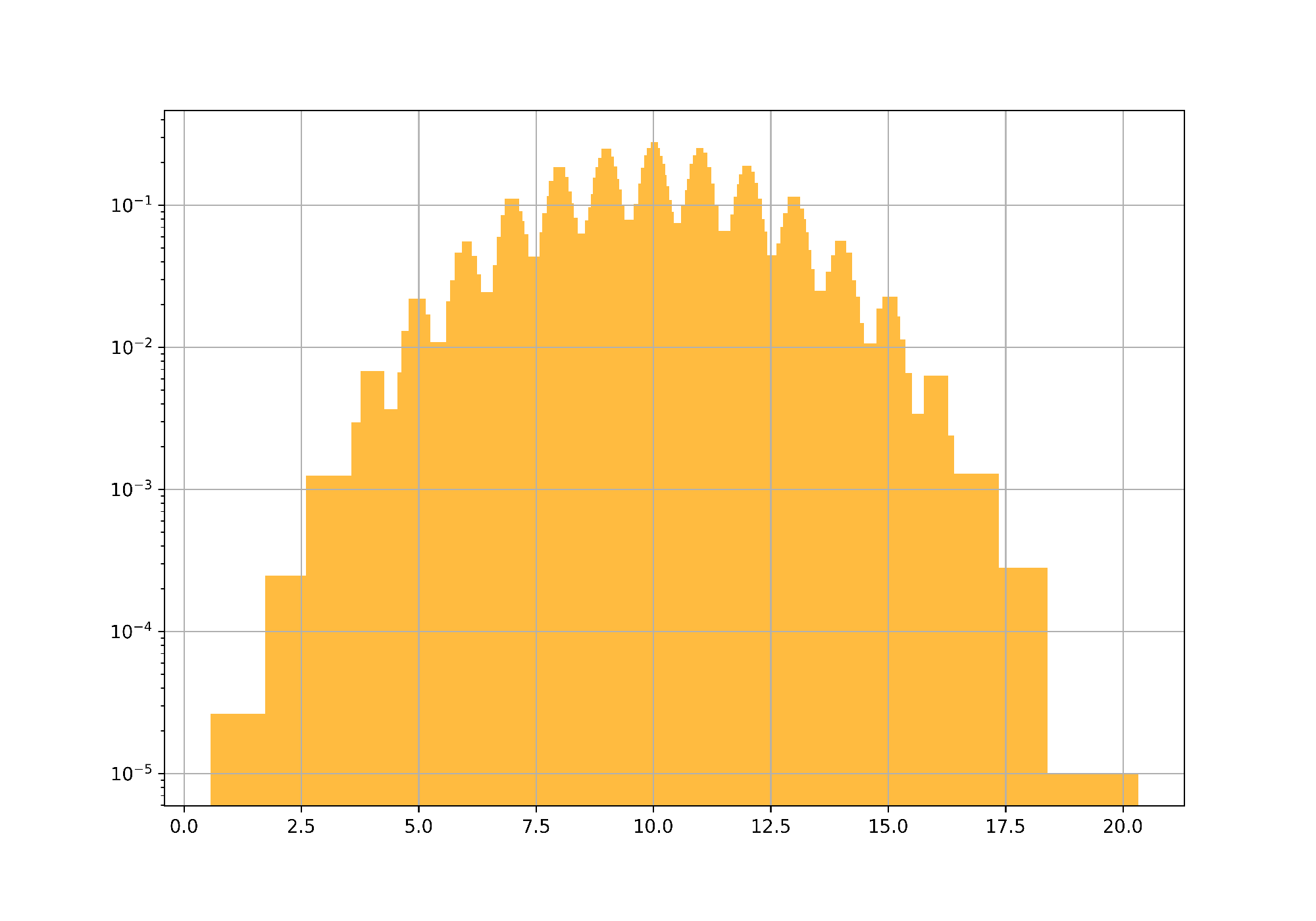}
&
\includegraphics[width=0.24\columnwidth]{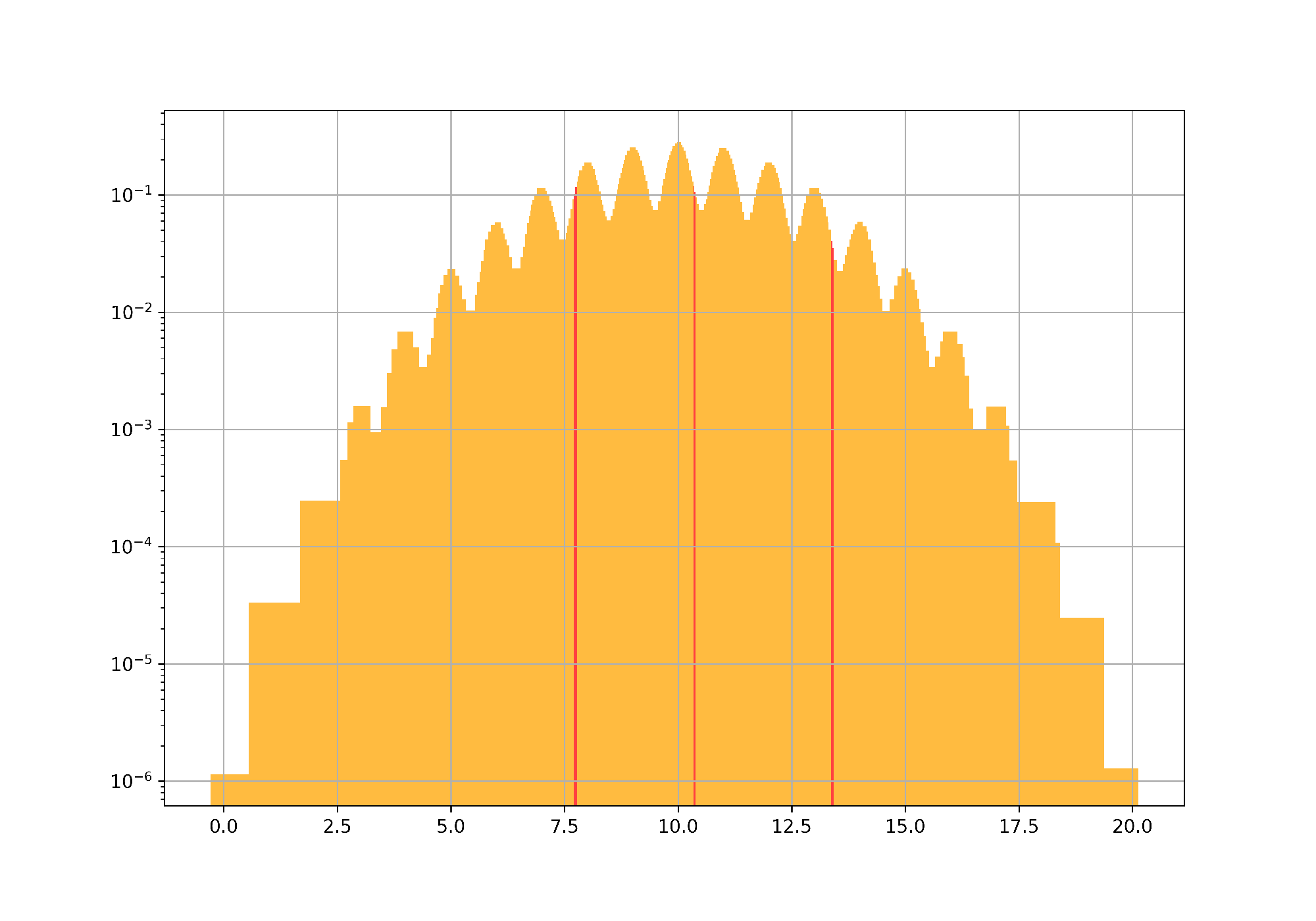}
&
\includegraphics[width=0.24\columnwidth]{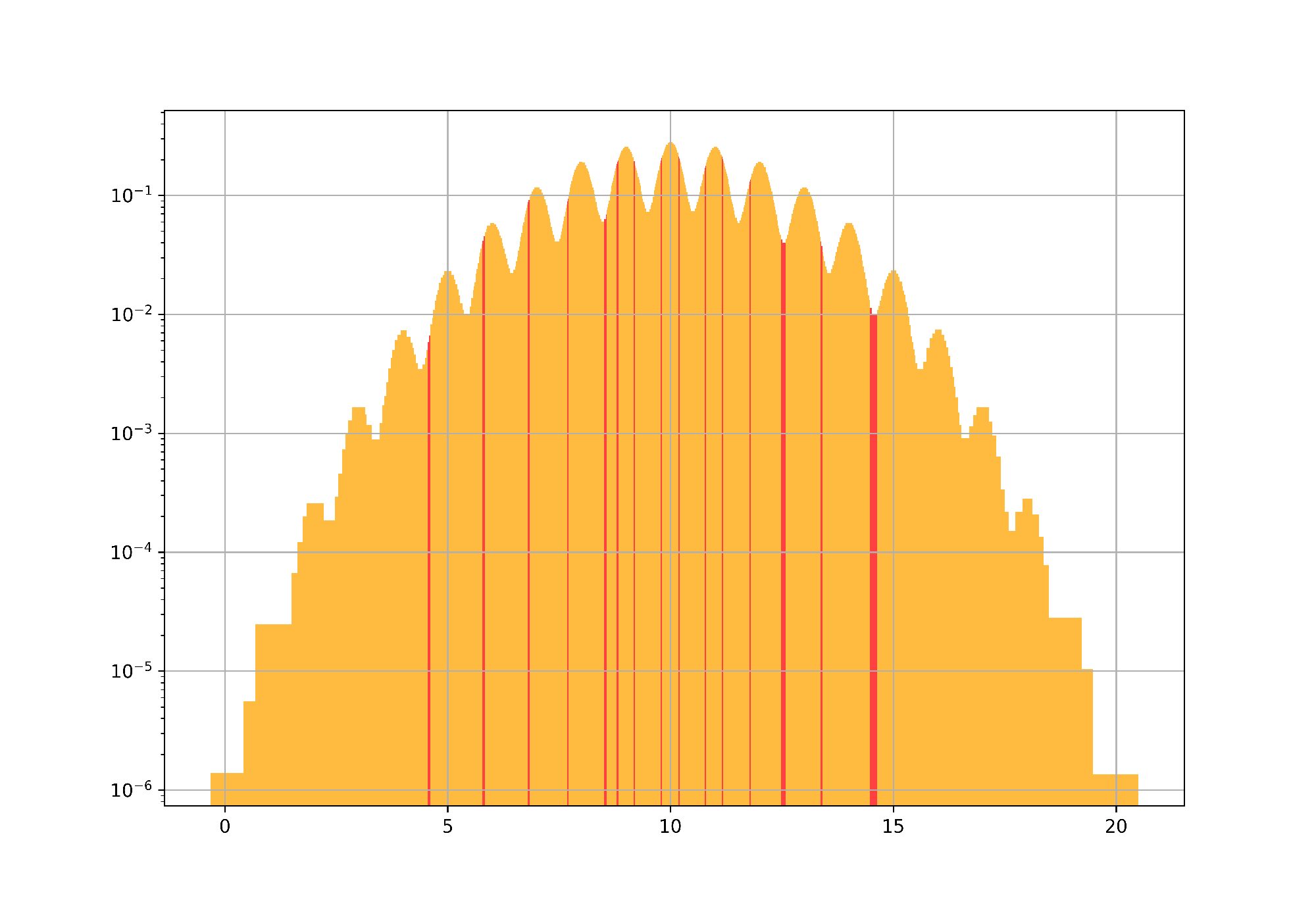}
&
\includegraphics[width=0.24\columnwidth]{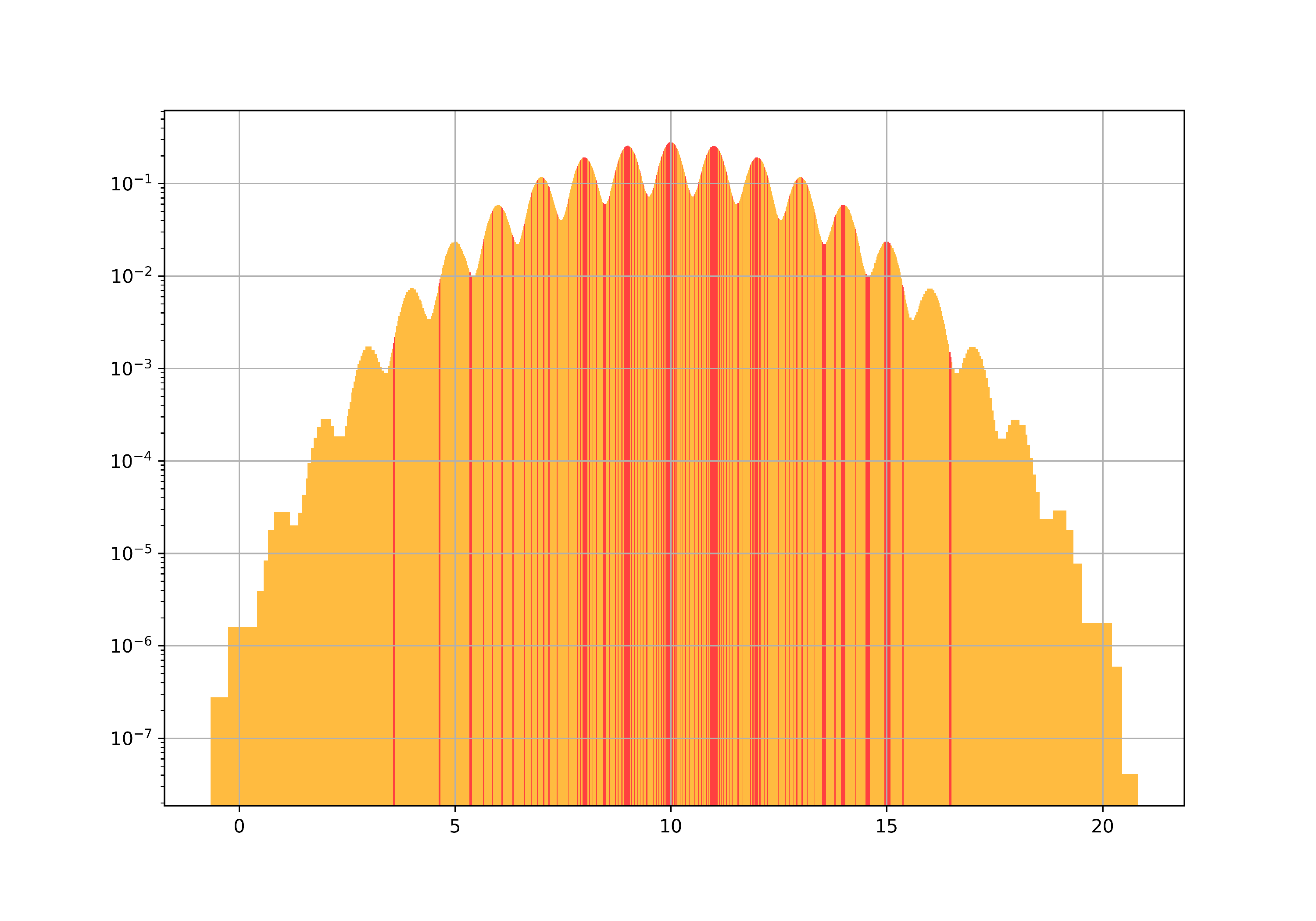}
\\
$n=2.6 \times 10^5$ & $n = 2.1 \times 10^6$ & $n=1.7 \times 10^7 $ & $n=1.3 \times 10^8$
\end{tabular}\end{center}\end{small}
\caption{Histograms built for data sets of increasing size}
\label{tab:scalabilitySamples}
\end{table}

\begin{figure}[htbp!]
\begin{center}
\includegraphics[trim={0 0.5cm 0 1.5cm 0},width=0.99\columnwidth]{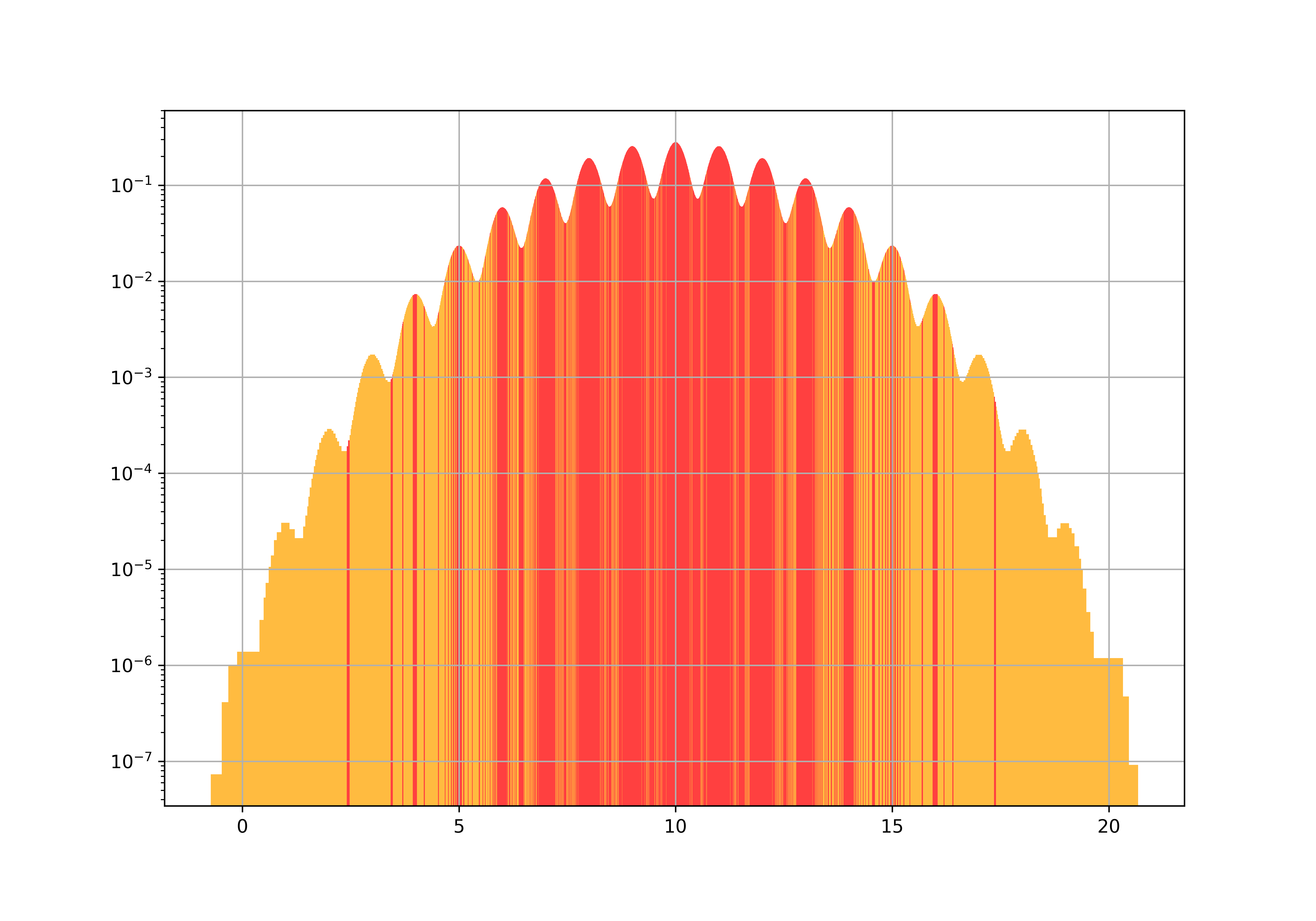}\\
\includegraphics[trim={0 2.5cm 0 1.5cm 0},width=0.32\columnwidth]{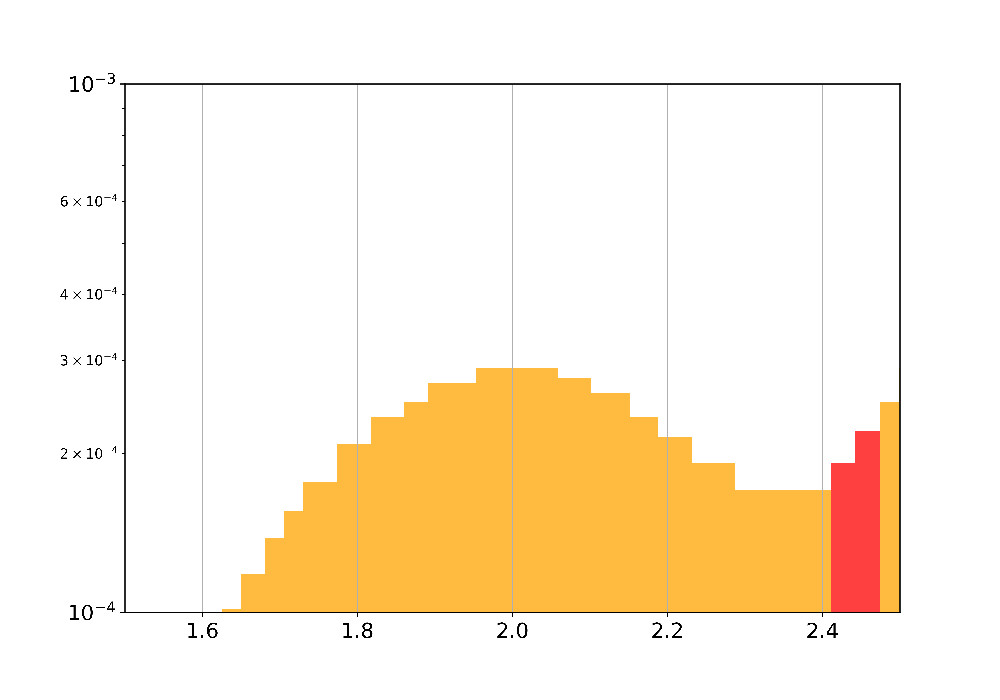}
\includegraphics[trim={0 2.5cm 0 1.5cm 0},width=0.32\columnwidth]{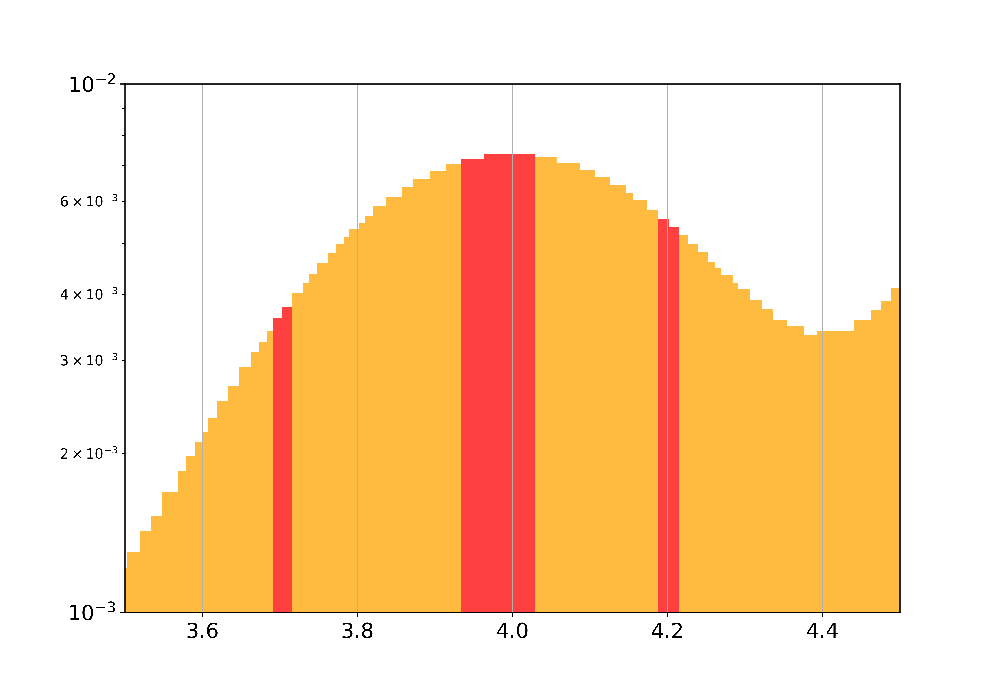}
\includegraphics[trim={0 2.5cm 0 1.5cm 0},width=0.32\columnwidth]{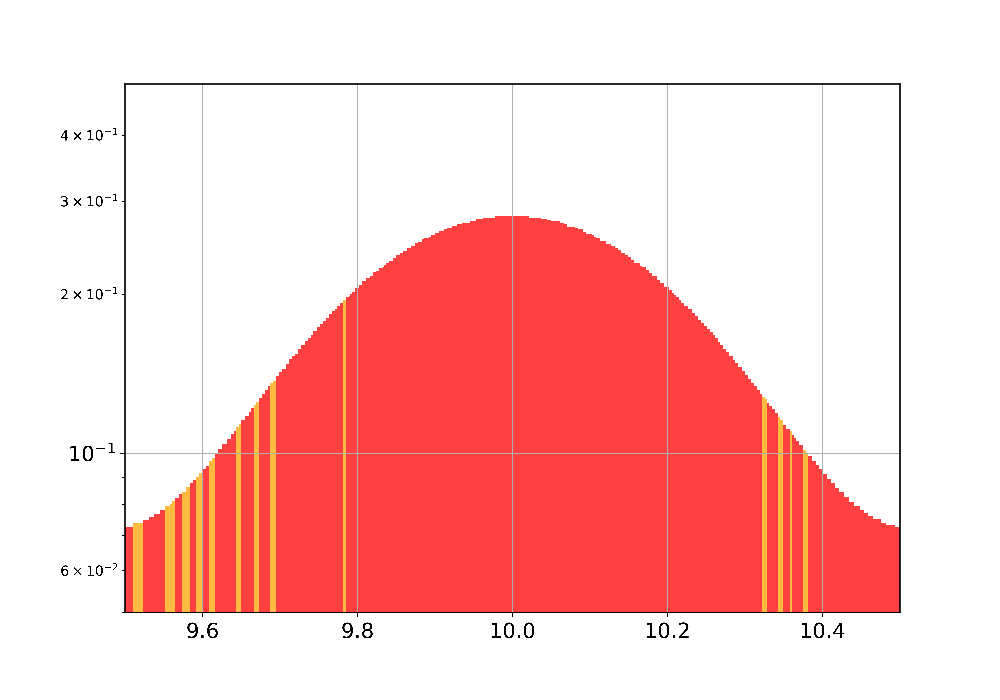}
\end{center}
\caption{Histograms built for a large data set with one billion data entries, with a zoom on the components 2, 4 and 10 of the Gaussian mixture}
\label{fig:scalabilityLargestSample}
\end{figure}

Table~\ref{tab:scalabilitySamples} displays the histograms built for a series of data sets with increasing sizes on six orders of magnitude, from $n=6.4 \times 10^1$ to $n=1.3 \times 10^8$.
The larger the data set, the more accurate the obtained histogram.
With few data, only part of the distribution is discovered, and the histograms are blind to the tails of the distribution and to most of its patterns.
As the amount of processed data increases, the distribution is summarized more and more completely and accurately.

The most detailed histogram obtained with one billion of data entries is displayed in Figure~\ref{fig:scalabilityLargestSample}. It consists in about 1900 intervals with heavily unbalanced distribution of lengths, frequencies and densities, ranging from $0.0003$ to $0.7$ for the lengths, from 20 to $14,500,000$ for the frequencies and from $7.3 \; 10^{-7}$ to 0.3 for the densities.
Figure~\ref{fig:scalabilityLargestSample} also shows a zoom of the histogram on the components 2, 4 and 10 of the underlying Gaussian mixture distribution. The Gaussian $component_{10}$ is by far the most populated and its piecewise constant density estimation provided by the histogram is both very smooth and accurate, using 203 intervals in $[9.5; 10.5]$. 
According to the figures in Table~\ref{tab:scalabilitySamples}, the Gaussian $component_2$ was not even sampled for $n \leq 10^4$ and and its shape began to roughly appear for $n \geq10^7$. With $n \geq 10^9$, this Gaussian $component_2$ is pretty well approximated in Figure~\ref{fig:scalabilityLargestSample} using 25 intervals in $[1.5; 2.5]$, although the quality of the approximation is far from that of $component_{10}$.
Note that even with one billion data entries, the first Gaussian $component_{0}$ is still roughly approximated, using only 7 intervals in $]\infty; 0.5]$

\paragraph{Computation time.}

\begin{figure}[htbp!]
\begin{center}
\includegraphics[width=0.7\columnwidth]{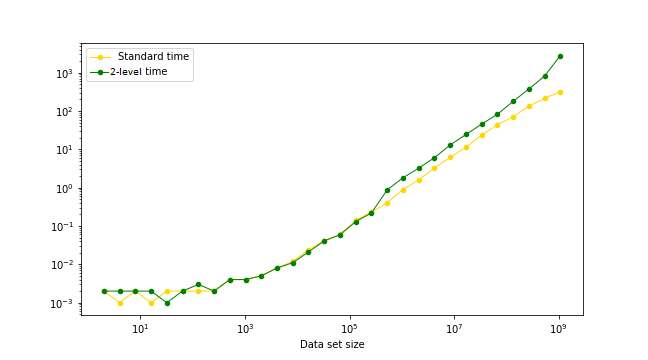}
\end{center}
\caption{Computation time in seconds using or not the two-level method for large scale data sets}
\label{fig:scalabilityStudyTimes}
\end{figure}

The experiments are performed on a PC under Windows Server 2012, with a processor Intel Xeon Gold 3150 2.7 GHz and 192 GB RAM, using a single core as the implementation is not parallel.
Figure~\ref{fig:scalabilityStudyTimes} reports the computation time in seconds for the standard and two-level methods. In order to focus on the computation time of each method, the initialization time that is common to both methods is not taken into account. This initialization time mainly consists in reading the data from an input file and initializing an input contingency table in memory with the pairs (value, frequency) sorted by values.
For data sets with size below half a million, the PICH criterion is not triggered and the two-level method reduces to the standard method. 
For larger data sets, the PICH criterion is triggered and the two-level method requires between two and three time more computation time than the standard method, as expected.
For the largest data set with one billion data entries, the histogram required a few hours and around 150 GB RAM to be built.

\paragraph{Scalability and heavy tail distribution.}
The previous experiment has shown that the histograms built using or not the two-level method have similar quality based on about the same numbers of intervals.
We perform a last challenging experiment that combines scalability and a heavy tail distribution.
We exploit a Gaussian mixture with 21 components where the mixture weights are distributed according to a Binomial distribution $B(n=20, p=0.5)$, with each mixture component based on a Gaussian distribution $G(\mu=10^i, \sigma=\mu/4)$. The range of the means of the Gausssian components is $[1; 10^{20}]$, instead of $[0;20]$ in the preceding scalability experiment.

\begin{figure}[htbp!]
\begin{center}
\includegraphics[trim={0 0.5cm 0 1.5cm 0},width=0.99\columnwidth]{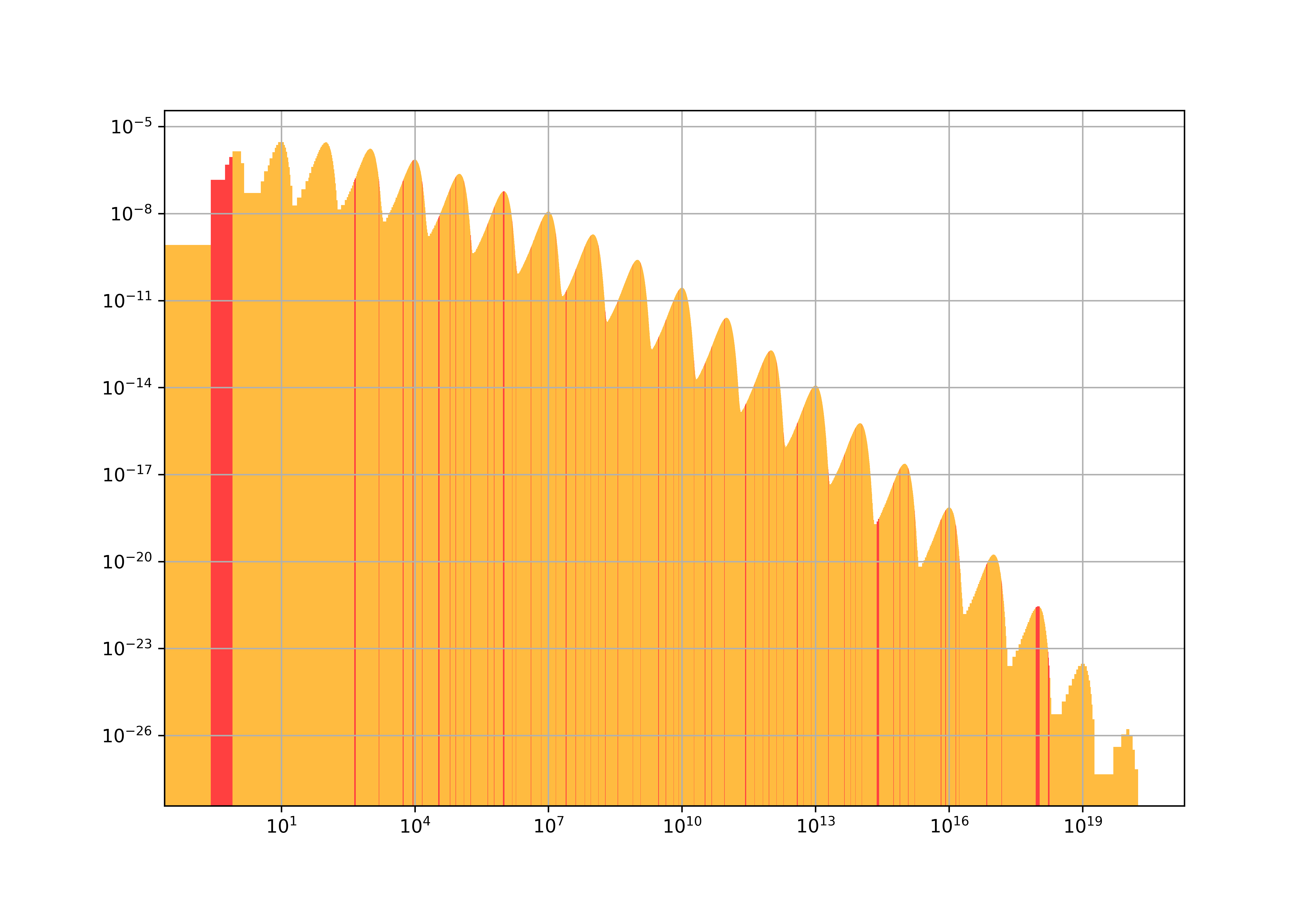}
\end{center}
\caption{Histograms built for a large data set with one billion data entries and a heavy tail distribution}
\label{fig:scalabilityLargestDynamicRangeSample}
\end{figure}

Without the two-level method, the width of the $\epsilon$-bin is about $10^{20}/10^9=10^{11}$, so that the first eleven mixture components are merged into the first interval of the built histogram. Using the two-level method, the histogram is very detailed and all the mixture components are approximated using altogether about 3800 intervals.
The most detailed histogram obtained with one billion of data entries is displayed in Figure~\ref{fig:scalabilityLargestDynamicRangeSample}.

\section{Future work}
\label{sec:futureWork}
In this section, we discuss future work.

\subsection{Convergence to data distribution}

The experiments on a large scale data set reported in Section~\ref{sec:scalability} suggest that histograms built by the proposed method seem to converge towards the underlying distribution as the size of the data set increases.
In future work, it would be interesting to investigate on this property from a theoretical point of view.

Many alternative histogram approaches have been studied in the literature, with deep theoretical insights w.r.t. their convergence properties. Most of these approaches rely on strong assumptions such as the existence of a continuous density, of first and second derivatives with sometimes bounded derivatives. Some approaches assume that the number of intervals and the frequency per interval increase as $n \rightarrow \infty$ to provide convergence properties. The quality of the histograms is assessed using statistical distances such  as the Kullback-Leibler divergence, the Hellinger distance or the mean square error.
Contrary to these methods, and following the minimum description length (MDL) approach of Rissanen exploited in the K\&M histogram method, we make no assumption regarding the data distribution and only focus on the data compression that is possible if there are patterns in the data. 
Studying the convergence property of this kind of approach is an open problem that might be hard to tackle.

\subsection{Visualization tools}

Using the G-Enum method, each histogram model $M$ is evaluated using $cost(M)$, that is its coding length according to the MDL approach.
This criterion can be normalized using the cost of the null histogram model $M_\emptyset$ that contains one single interval, according to
$$level(M) = 1 - \frac{cost(M)}{cost(M_\emptyset)}.$$
The criterion $Level$ that represents a compression rate and assesses the quality of a histogram is useful in practice as it allows to compare the interest of several variables through their histogram and allows the data analyst to focus on the most interesting variables.
However, this criterion is no longer available when the two-level method is triggered. As we have a global histogram obtained from the aggregation of the sub-histograms, we could compute a global $Level$, as if the global histogram had been obtained with the G-Enum method alone using the smallest $\epsilon$-bin among all the sub histograms. This approach might not be fully satisfying, as the resulting $Level$ is likely to drop to 0 when the sub histograms are built from data subsets with radically different ranges of values. This needs to be further investigated in future work.

\medskip
Other useful improvements for data exploration involve dedicated visualization tools, which could easily switch between the standard and logarithmic scale, either for the values ($X$ axis) using the $\log_{\mathcal{D}}^{(cr)}$ function or for the probabilities or densities ($Y$ axis) using the standard $\log$ function.  Indeed, the $\log_{\mathcal{D}}^{(cr)}$ function introduced in Section~\ref{sec:logTransformation} looks convenient to visualize any data, negative, null or positive using a logarithmic scale
It should be noticed that the horizontal upper line of the histogram bars built in the initial value domain should be drawn with a logarithmic slope when represented using the the logarithmic scale.
Zooming features may be convenient in the case of data with  a heavy tail distribution.
A visualization tool should also be able to automatically propose a default view (choosing what to visualize on each axis, using either the standard or logarithmic scale), as some distributions are so unbalanced that nothing can be seen on some views.
Finally, it could be interesting to keep some basic statistics per histogram interval, such as the mean, standard deviation, minimum and maximum of the values within the interval. This could be helpful in a data exploration context to inspect the distribution tails and to help identifying the outliers.

\subsection{Faster implementation for better applicability}

Overall, the G-Enum algorithm and the two-level method have a theoretical time complexity of O$(n \log n)$, which makes them suitable to process large data sets. 
Let us first remind the solutions implemented in these algorithms to push the limits of their applicability as far as possible:
\begin{itemize}
	\item the greedy bottom-up heuristic in O($n \log n)$ in the G-Enum algorithm is used rather than the optimal algorithm in O$(n^3)$ based on dynamic programing,
	\item the G-Enum method allows an automatic choice of the best histogram granularity $G$ without requiring a user parameter,
	\begin{itemize}
		\item the $E=10^9$ internal parameter, which corresponds to the maximum granularity, has been chosen to be as large as possible within the limits of computer numerical precision,
		\item the optimization of the granularity relies on a loop on granularities increasing by powers of two to keep a O$(n \log n)$ time complexity,
	\end{itemize}
	\item the two-level method method has been suggested to deal with data sets with outliers or heavy tail distribution,
\begin{itemize}
	\item it accounts for the limits of floating-point representation, using the $\log(\mathbb{R}^{(cr)})$ transformation of the data,
	\item it exploits a PICH criterion to split a data set into well-conditioned data subsets
	\item it keeps an overall time complexity of O$(n \log n)$.
\end{itemize}
\end{itemize}

Decreasing the time complexity below O$(n \log n)$ might not be feasible without important loss of accuracy, as each data must be seen at least once, which requires O$(n)$ time.
Still, the time complexity comes with a constant factor $\alpha$ such that the computation time can be bounded by $\alpha \times  n \log n$. 
Decreasing $\alpha$ by a percentage may not be worth it, but decreasing it by a factor my be useful in practice. We suggest below several possible solutions that may altogether result in decreasing the computation time by a factor of 2 to 10, depending on the data to process.

\paragraph{Early stopping for the G-Enum heuristic.}
The time complexity of the G-Enum heuristic is O$((2 + \log_2 E - \log_2 n)n \log n)$ as a function of both $n$ and $E$.
Instead of evaluating all the granularities $G_i=2^i, 1 \leq i \leq 30, G_{30} \approx E$, the G-Enum heuristic could stop as soon as the resulting model cost decreases.
Indeed, as the G-Enum criterion is regularized, the model costs tend to decrease at the beginning of the loop when finer granularities allow more accurate histograms with better likelihood and they tend to increase at the end of the loop when too fined granularities are penalized by larger prior terms. This behavior is frequently observed in practice and the optimal models in case of well conditioned data are often found for $G_i \leq n$. For  example, with a data set of size $n=10,000$, the factor $(2 + \log_2 E - \log_2 n) \approx 19$ could drop down to $2$, with a computation time almost ten times faster.
To get a better trade-off between computation time and model accuracy, we evaluated the following stopping criterion: stop exploring the granularities as soon as $G_i > \sqrt{n}$ and at least 3 successive granularities do not improve the model cost. This new trade-off looks promising, as computation time could be largely reduced without any loss in model accuracy. 

\paragraph{Alternative trade-offs in the two-level method.}
Altogether, the two-level method relies on the maximum granularity parameter $E=10^9$ and on the PICH criterion which exploits a granularity threshold $t_E=\sqrt E \log E$ and a colliding frequency threshold $t_c = \log n$. 
This has been discussed in Section~\ref{sec:wchProperty} and further investigated in Section~
\ref{sec:thresoldICH}, resulting in acceptable trade-offs between the competing criterions of automation, theoretical optimality, accuracy and scalability.
For applicative contexts where the scalability criterion is the main issue, new Pareto optimal trade-offs could be used for the choices of $E, t_E$ and $t_c$ to comply with specific applicative constraints.

\paragraph{Faster split of the data set in the two-level method.}
The purpose of the first level of the two-level method is to split a PICH data set into a list a PWCH data subsets. This first level does not require optimized results of equivalent quality as those used to build histograms in the second level. If scalability is an issue, this split heuristic could be simplified or even replaced by alternative more time efficient heuristics. 

\paragraph{Parallelisation of the algorithms.}
At least the second level of the two-level method looks easy to parallelize on the basis of each data subset. Further work is still necessary to efficiently parallelize the whole heuristic.

\subsection{Big data and fast data}

The scalability experiments in Section~\ref{sec:scalability} show that more accurate models are obtained with more data. Although the experiments were performed using an artificial data set, real world data sets often come with a long tail distribution that requires a lot a data to be accurately approximated.
In many use cases, such accurate summaries could be used to query huge data sets stored in big data infrastructures or even to keep track of fast data streams and still exploit them when the data is not longer available.
Note that the obtained histogram summaries are very parsimonious, with for example about 1000 intervals instead of $n=1.3 \times 10^8$ data entries in the scalability experiments. 
Exact results with a precision better than one billionth are unnecessary in most contexts of data exploration. Using accurate histogram summaries could then be a time, memory and energy efficient alternative to solutions such as \emph{elastic search}, which requires vast amounts of processing time and storage capacities. 

The two-level method is scalable enough to process data sets of size up to one billion of data entries within a few hours and around 150 GB of memory.
This is not sufficient in the context of big data, where this gigabyte scale algorithm needs to be extended to process terabytes or even petabytes of data.
Some divide and conquer principles need to be exploited to scale up the two-level method.
The context of fast data streams is still more challenging as the computational resources are likely to be more tightly bounded and the data can be seen only once.
Furthermore, in the case of non stationary data distribution, stream mining algorithms have to cope with the time evolution of distributions.


\subsection{Pushing the limits further and beyond}

Hello happy reader, you are in the Easter egg section.
Overall, the proposed approach relies on the G-Enum method that have strong theoretical foundations.
Still, in the end, numerical methods have to be applied on data sets with computer real values in $\mathbb{R}^{(cr)}$, that do not always behave as the mathematical real values from $\mathbb{R}$.
The G-Enum method has thus been extended to deal with outliers and with numerical precision limits.
Altogether, we call this histogram method \emph{WAOH}, as Widely Applicable Optimal Histograms.

As building adversarial data sets is fairly easy, the WAOH method may fail in numerous cases. Future work is necessary to push the limits of the method further and beyond.
However, these improvements may not be worth it.
Indeed, the data sets that could benefit from such improvements are likely to be outliers in the distribution of all real world data sets (see Theorem~\ref{thEaster}).

\begin{theorem}
\label{thEaster}
Let $\mathds{D}^{(cr)}$ be the set of all possible data sets with data entries in $\mathbb{R}^{(cr)}$. Let $\mathds{D}_{rw, t}^{(cr)}$ be the set of all real world data sets produced by man kind until time t and $\mathds{D}_{rw_u, t}^{(cr)}$ the related subset of useful real world data sets. We have
\begin{equation*}
\forall t \in \mathbb{R}, \frac{|\mathds{D}_{rw, t}^{(cr)}|}{|\mathds{D}^{(cr)}|} \approx 0 \quad \mathnormal{and} \quad
lim_{t\to\infty}\frac{|\mathds{D}_{rw_u, t}^{(cr)}|}{|\mathds{D}_{rw, t}^{(cr)}|} =0.
\end{equation*}
\end{theorem}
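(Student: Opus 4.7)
The plan is to treat the two claims separately, using order-of-magnitude estimates rather than sharp bounds since the statement itself is framed around $\approx 0$. The overall strategy is to bound the numerators from above by physical or sociological resource constraints and the denominators from below by pure combinatorics, then let the gap speak for itself.

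For the first claim, I would first lower-bound $|\mathds{D}^{(cr)}|$ by noting that it contains all finite tuples of $\mathbb{R}^{(cr)}$, hence its cardinality dominates $|\mathbb{R}^{(cr)}|^n \approx 2^{64 n}$ for every $n$; already for modest $n$ this is astronomical. I would then upper-bound $|\mathds{D}_{rw,t}^{(cr)}|$ by the total quantity of digital data ever recorded by mankind up to time $t$. Empirical estimates place the current value near $10^{23}$ bits, and any ultimate bound is governed by fundamental energy-information limits such as Landauer's principle or Bremermann's limit applied to the accessible portion of the observable universe. Since each entry of $\mathbb{R}^{(cr)}$ costs 64 bits, $|\mathds{D}_{rw,t}^{(cr)}|$ is controlled by these physical resources and cannot keep pace with the combinatorial blow-up of $|\mathds{D}^{(cr)}|$. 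The ratio is then negligible in every reasonable metric, which is what $\approx 0$ formalizes.

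For the second claim, I would invoke an empirical Pareto-type regularity reminiscent of Sturgeon's revelation: the overwhelming majority of produced data is not actually useful. The sub-plan is to argue that automated sources (telemetry, logs, surveillance feeds, auto-generated content) grow strictly faster than human-curated, analyzed, or actionable data. Under any reasonable operational definition of \emph{useful}, for instance that the data set has ever produced a documented analytical conclusion, the numerator $|\mathds{D}_{rw_u, t}^{(cr)}|$ would then be dominated by $|\mathds{D}_{rw, t}^{(cr)}|$ in the large-$t$ regime, and the limit would vanish.

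The hard part will of course be the formalization of \emph{useful}. Any explicit definition one writes down risks being circular (useful data is data used by useful methods), arbitrary (a specific threshold on citations or downloads), or vacuously satisfied. My escape route is to replace a formal definition by the aforementioned empirical regularity, which seems consistent across every measured corpus of digital data and which suffices to drive the ratio to zero. Given the location of this result, citing the regularity and waving convincingly should constitute a proof of the expected standard.
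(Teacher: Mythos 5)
You have landed on the paper's Easter egg: the ``proof'' the paper gives for Theorem~\ref{thEaster} is a one-line Fermat-style joke (``the proof could not be included in the paper for latex compile error reasons''), so there is no actual argument to compare yours against --- your proposal already contains strictly more content than the original. On the substance, your treatment of the first claim is sound and can even be shortened: since $\mathds{D}^{(cr)}$ contains tuples of every finite length over a set of roughly $2^{64}$ values, it is infinite, whereas $\mathds{D}_{rw,t}^{(cr)}$ is finite for every fixed $t$ by any physical storage bound, so the ratio is exactly $0$ and the Landauer/Bremermann machinery, while entertaining, is decorative. The genuine gap is in the second claim, and you have correctly located it yourself: without a definition of \emph{useful}, the statement is not a mathematical proposition, and substituting an empirical Pareto regularity (``most data is never used'') for the missing definition amounts to asserting the conclusion rather than deriving it. To close that gap you would have to either fix an operational definition and accept that the theorem then depends on that arbitrary choice, or concede --- as the author implicitly does --- that the result is a joke whose proof is best left to the reader's sense of humour. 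Your final sentence suggests you already know which option applies, so in the only sense that matters here, your proof meets the expected standard.
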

\begin{proof}
The proof could not be included in the paper for latex compile error reasons.
\end{proof}

\section{Conclusion}
\label{sec:conclusion}

This paper starts from the G-Enum histogram method that have strong theoretical foundations.
Still, in the end, numerical methods have to be applied on data sets with values stored on computers using a floating-point representation, that do not always behave as the mathematical real values from $\mathbb{R}$.
The G-Enum method has then been embedded into the two-level method to finely account for the limits and pitfalls of floating-point representation.
This heuristic allows to extend the applicability of the method to a wide range of data sets, including the case of outliers or heavy tail distribution.
Extensive experiments demonstrate the benefits of the approach, that allows to build accurate histogram summaries of data sets within efficient computation time.
Future works include extensions to the processing of huge data stores or fast data streams, which could bring time, memory and energy efficient building brick for many data exploratory or supervised data mining tasks.

\bibliographystyle{apalike} 
\bibliography{TwoLevelHistograms}

\begin{thebibliography}{}

\bibitem[Boull{\'e} et~al., 2016]{BoulleEtAlArxiv16}
Boull{\'e}, M., Cl{\'e}rot, F., and Hue, C. (2016).
\newblock Revisiting enumerative two-part crude {MDL} for {Bernoulli} and
  multinomial distributions (extended version).
\newblock Technical report, arXiv, abs/1608.05522.

\bibitem[Davies and Kovac, 2004]{Davies2004}
Davies, P.~L. and Kovac, A. (2004).
\newblock Densities, spectral densities and modality.
\newblock {\em Ann. Statist.}, 32(3):1093--1136.

\bibitem[{Davies, Laurie} et~al., 2009]{Davies2009}
{Davies, Laurie}, {Gather, Ursula}, {Nordman, Dan}, and {Weinert, Henrike}
  (2009).
\newblock A comparison of automatic histogram constructions.
\newblock {\em ESAIM: PS}, 13:181--196.

\bibitem[Freedman and Diaconis, 1981]{Freedman1981}
Freedman, D. and Diaconis, P. (1981).
\newblock On the histogram as a density estimator:l2 theory.
\newblock {\em Zeitschrift f{\"u}r Wahrscheinlichkeitstheorie und Verwandte
  Gebiete}, 57(4):453--476.

\bibitem[Kontkanen and Myllymäki, 2007]{pmlrv2kontkanen07a}
Kontkanen, P. and Myllymäki, P. (2007).
\newblock Mdl histogram density estimation.
\newblock In Meila, M. and Shen, X., editors, {\em Proceedings of the Eleventh
  International Conference on Artificial Intelligence and Statistics}, volume~2
  of {\em Proceedings of Machine Learning Research}, pages 219--226. PMLR.

\bibitem[Rissanen, 1983]{rissanen1983}
Rissanen, J. (1983).
\newblock A universal prior for integers and estimation by minimum description
  length.
\newblock {\em Ann. Statist.}, 11(2):416--431.

\bibitem[{Rissanen} et~al., 1992]{Rissanen1992densityestimation}
{Rissanen}, J., {Speed}, T.~P., and {Yu}, B. (1992).
\newblock Density estimation by stochastic complexity.
\newblock {\em IEEE Transactions on Information Theory}, 38(2):315--323.

\bibitem[Rozenholc et~al., 2010]{RMG2010}
Rozenholc, Y., Mildenberger, T., and Gather, U. (2010).
\newblock Combining regular and irregular histograms by penalized likelihood.
\newblock {\em Computational Statistics and Data Analysis}, 54(12):3313 --
  3323.

\bibitem[Scargle et~al., 2013]{Scargle2013}
Scargle, J.~D., Norris, J.~P., Jackson, B., and Chiang, J. (2013).
\newblock Studies in astronomical time series analysis. vi. bayesian block
  representations.
\newblock {\em The Astrophysical Journal}, 764(2):167.

\bibitem[{Zelaya Mendizábal} et~al., 2023]{ZelayaEtAl23}
{Zelaya Mendizábal}, V., Boullé, M., and Rossi, F. (2023).
\newblock Fast and fully-automated histograms for large-scale data sets.
\newblock {\em Computational Statistics \& Data Analysis}, 180:107668.

\end{thebibliography}

\end{document}